\def\eqref#1{equation~\ref{#1}}
\def\1{\bm{1}}
\DeclareMathAlphabet{\mathsfit}{\encodingdefault}{\sfdefault}{m}{sl}
\SetMathAlphabet{\mathsfit}{bold}{\encodingdefault}{\sfdefault}{bx}{n}
\newcommand{\R}{\mathbb{R}}
\DeclareMathOperator*{\argmin}{arg\,min}
\pgfplotsset{compat=1.3} 
\renewcommand{\det}{\mathit{det}}
\newcommand{\CF}{\mathit{CF}}
\renewcommand{\cal}[1]{\mathcal{#1}}
\newcommand{\Step}{\textrm{StEP}\xspace}
\newcommand{\DICE}{\textrm{DiCE}\xspace}
\newcommand{\FACE}{\textrm{FACE}\xspace}
\newcommand{\CCHVAE}{\textrm{C-CHVAE}\xspace}
\newcommand{\euler}{\mathrm{e}}
\newcommand{\Adult}{\texttt{UCI Adult}\xspace}
\newcommand*\rot{\rotatebox[origin=c]{70}}
\renewcommand{\cal}[1]{\mathcal{#1}}
\newcommand{\poi}{\vec x}
\newtheorem{theorem}{Theorem}[section]
\newtheorem{lemma}[theorem]{Lemma}
\theoremstyle{definition}
\theoremstyle{definition}
\newtheorem{example}[theorem]{Example}
\title{Simple Steps to Success: A Method for Step-Based \\
Counterfactual Explanations}
\author{\name Jenny Hamer\thanks{equal contribution} \email hamer@google.com \\
      \addr Google DeepMind, New York
      \AND
      \name Nicholas Perello\footnotemark[1] \email nperello@umass.edu \\
      \addr University of Massachusetts, Amherst
      \AND
      \name Jason Valladares\thanks{work completed during Master's at the University of Massachusetts, Amherst} \email jason.valladares3@gmail.com \\
      \addr Google 
      \AND
      Vignesh Viswanathan\footnotemark[1] \email vviswanathan@umass.edu\\
      \addr University of Massachusetts, Amherst
      \AND
      Yair Zick \email yzick@umass.edu\\
      \addr University of Massachusetts, Amherst
      }
\begin{document}

\maketitle

\begin{abstract}
Algorithmic recourse is a process that leverages counterfactual explanations, going beyond understanding why a system produced a given classification, to providing a user with actions they can take to change their predicted outcome.
Existing approaches to compute such interventions---known as {\em recourse}---identify a set of points that satisfy some desiderata---e.g. an intervention in the underlying causal graph, minimizing a cost function, etc. 
Satisfying these criteria, however, requires extensive knowledge of the underlying model structure, an often unrealistic amount of information in several domains. 
We propose a data-driven and model-agnostic framework to compute counterfactual explanations. 
We introduce \Step, a computationally efficient method that offers \emph{incremental steps} along the data manifold that directs users towards their desired outcome. 
We show that \Step uniquely satisfies a desirable set of axioms. Furthermore, via a thorough empirical and theoretical investigation, we show that \Step offers provable robustness and privacy guarantees while outperforming popular methods along important metrics. 
\end{abstract}

\section{Introduction}\label{sec:intro}
An automatic decision maker produces a negative prediction for some user---e.g. denies their grad school application, offers them bad loan terms or an overly strict criminal sentence; what can the user do to change this outcome? 
Counterfactual explanations \citep{wachter2017counterfactual} recommend actions that change algorithmic predictions on a given point. 
This is usually modeled as a constrained optimization problem that outputs specific points which satisfy certain desirable properties: actionability, validity, data manifold closeness and causality to name a few \citep{verma2020counterfactual}. 
Achieving these desiderata often requires both significant compute power and user/model information. 
We propose a \textbf{lightweight algorithm for producing counterfactual explanations}; rather than searching for good interventions for users, we search for good \emph{directions} that users can take. 
We use these directions to create an iterative recourse mechanism where stakeholders can repeatedly request new directions after carrying out the recommended changes. 
We show that by carefully choosing these directions, we satisfy several desirable properties of algorithmic recourse at a significantly lower computational cost.

\subsection{Our Contributions}
We propose a recourse algorithm called {\em Stepwise Explainable Paths} (\Step). 
Our key theoretical insight is that \Step is the \emph{only method} for generating recourse directions (counterfactual explanations) which satisfies a set of natural properties. 
\Step directions are model-agnostic and easy to compute, requiring only the training dataset and the output of the model of interest on points in the training dataset. 
That is, our method does not require prior knowledge of the underlying model architecture and instead takes a strongly data-dependent approach. 
In addition to introducing \Step, a novel step-based recourse method, we present its provable quality (\Cref{subsec:step}), diversity (\Cref{subsec:step-clustering}), and privacy (\Cref{sec:step-privacy}) guarantees. We also provide an extensive experimental evaluation of \Step, including a holistic cross-comparison with three popular recourse methods (\DICE \citep{Mothilal2020Dice}, \FACE \citep{poyiadzi2020face}, and \CCHVAE \citep{pawelczyk2020cchvae}) on three widely-used financial datasets---Credit Card Default \citep{yeh2009creditcarddefault}, Give Me Some Credit \citep{credit2011givemesomecredit} and UCI Adult \citep{Kohavi1996Adult} datasets. We also investigate \Step's robustness to noise (\Cref{subsec:robustness}).

\subsection{Related Work}\label{sec:related}
Counterfactual explanations are a fundamental concept in the model explanation literature \citep{verma2020counterfactual}. 
First proposed by \cite{wachter2017counterfactual}, they are founded in legal interpretations of explainability \citep{wachter2018counterfactual} and are distinct from \emph{feature highlighting} methods \citep{barocas2020counterfactuals}, such as Shapley value-based methods \citep{Datta2016,lundberg2017unexpected}, Local Interpretable Model-Agnostic Explanations \citep{Ribeiro2016should} and saliency maps \citep{simonyan2013deep}. 
Recent policy efforts describe algorithmic recourse as a means for fostering trust in AI systems \citep{wachter2018counterfactual,ntia2023policy,biden2023execorderAI}.
Whereas feature highlighting methods indicate important features (or feature interactions \citep{patel2021high}), counterfactual explanations identify \emph{changes} to features that are likely to change the outcome\citep{karimi2020survey,verma2020counterfactual}. 
These changes are usually solutions to an optimization problem, ensuring that the explanations are valid, actionable, sparse and diverse \citep{Mothilal2020Dice, Karimi2020Mace,Ustun2019Recourse}. 
These solutions are computed using integer linear programs \citep{Ustun2019Recourse, Kentaro2020Dace}, SAT solvers \citep{Karimi2020Mace}, or gradient descent \citep{Mothilal2020Dice, wachter2017counterfactual}. 
Other works provide a sequence of steps from the point of interest to a point with the desirable outcome along the data manifold \citep{poyiadzi2020face}. 
While these prior methods offer reasonable approaches to recourse generation, none of them axiomatically characterize their approach.

Another approach in the literature is to solve the causal problem of finding the best intervention \citep{Karimi2021Recourse}. 
\citet{Karimi2021Recourse} argue that any recourse recommendation must be consistent with the underlying causal relations between variables. 
However, this requires complete (or, as in \citet{Karimi2020Imperfect}, imperfect) knowledge of the underlying causal model, which is often practically or computationally infeasible. 

We address this issue by providing users with promising directional actions, allowing them more flexibility and agency in enacting recourse recommendations.
Our axiomatic characterization is similar to that of Monotone Influence Measures (MIM) \citep{sliwinski2019mim}. 
However, \citeauthor{sliwinski2019mim}'s approach is not iterative in nature, nor is their recommended direction guaranteed to change the prediction outcome.

\section{Preliminaries}\label{sec:prelims}
We introduce some general notation and definitions used throughout this work. We denote by $\poi \in \R^m$ a {\em point of interest} (PoI) corresponding to an individual or their current state, and define a dataset $\cal X = \{\vec x^1, \vec x^2, \dots, \vec x^n\} \subseteq \R^m$ with $m$ features and $n$ datapoints.
We use $x_i$ to denote the $i$-th index of the vector $\vec x$. 

We focus on binary classification using a {\em model of interest} $f: \R^m \mapsto \pm 1$ trained on the dataset $\cal X$; our goal is to produce a counterfactual explanation for $\poi \notin \cal X$ where $f(\poi) = -1$.
The {\em counterfactual explanation} for a point of interest $\poi$ is a direction (or a set of directions) $\vec d\in \R^m$  that moves the point $\poi$ towards the positive class, i.e. $f(\poi+ c \cdot \vec d) = 1$ for some positive value $c>0$. We go beyond simply providing an explanation for the the PoI's outcome by framing our output as recommended {\em recourse} which can be actioned by the user.
Upon applying recourse to a PoI $\poi$, we refer to $\poi + c \cdot \vec d$ as a {\em counterfactual} (CF) of $\poi$ denoted by $\poi_{\CF}$.

Offering a single explanation may not be helpful in real-world settings; users may be unable to make a single dramatic change to their features or may not exactly follow the suggested recourse. 
We allow the stakeholder to repeatedly request new recourse directions as they change their values until they are positively classified.
We call such an approach {\em direction-based recourse}.
To illustrate this approach, consider the following example of algorithmic loan approval.

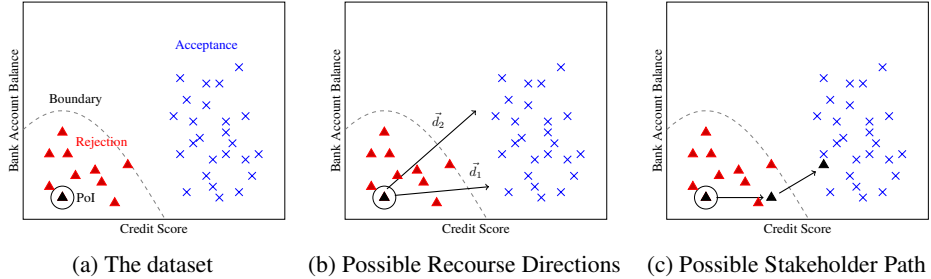
\begin{figure*}
    \centering
    \begin{subfigure}[t]{0.3\textwidth}
         \centering
         \resizebox{0.9\textwidth}{!}{
         \begin{tikzpicture}
        \begin{axis}[
        xmin = 0, 
        xmax = 10, 
        ymin = 0, 
        ymax = 10, 
        ticks = none, 
        xlabel={Credit Score}, 
        ylabel={Bank Account Balance}, 
        ]

        \addplot+[gray, no marks, dashed, domain = 0:5.4] {5*cos(deg((x-1.5)/2.5))};
        \node at (20,55) {Boundary};

        \addplot+[only marks, red, mark=triangle*, mark size = 4pt]
        coordinates {(1,1.5)(1.5, 1)(2,2)(2.75, 2.25)(3, 1.7)(1.7, 3)(1.5, 4)(3.5, 0.75)(4, 2.5)(1,3)};
        \node at (30,35) {\textcolor{red}{Rejection}};

        \addplot+[only marks, black, mark = triangle*, mark size = 4pt, mark options = {fill=black}]
        coordinates {(1.5, 1)};


        \addplot+[only marks, blue, mark=x, mark size = 4pt]
        coordinates {(6, 3)(7, 2)(7.25, 1.5)(6.5, 3.5)(6.25, 1.25)(7.5, 1)(7.75,3)(7.25, 2.75)(8.25,1.25)(8.5,2.75)(8.75, 2)(9, 3)(8, 3.5)(7.75, 4)
        (5.75,4.75)(6.75, 3.75)(6.5,4.5)(7, 5.25)(7.75, 4.5)(6.25, 5.5)(7, 6.25)(6, 6.5)(8.5, 4.75)(7.5,6.25)(8.25, 7)
        };
        \node at (70,80) {\textcolor{blue}{Acceptance}};
       \draw (15,10) circle (0.3cm);
       \node at (24,10) {PoI};
        \end{axis}

    \end{tikzpicture}}
         \caption{The dataset}
         \label{fig:loan-example-dataset}
     \end{subfigure}
     \begin{subfigure}[t]{0.3\textwidth}
         \centering
         \resizebox{0.9\textwidth}{!}{
         \begin{tikzpicture}
        \begin{axis}[
        xmin = 0, 
        xmax = 10, 
        ymin = 0, 
        ymax = 10, 
        ticks = none, 
        xlabel={Credit Score}, 
        ylabel={Bank Account Balance}, 
        legend pos=north east, 
        ]

        \addplot+[gray, no marks, dashed, domain = 0:5.4] {5*cos(deg((x-1.5)/2.5))};

        \addplot+[only marks, red, mark=triangle*, mark size = 4pt]
        coordinates {(1,1.5)(1.5, 1)(2,2)(2.75, 2.25)(3, 1.7)(1.7, 3)(1.5, 4)(3.5, 0.75)(4, 2.5)(1,3)};

        \addplot+[only marks, black, mark = triangle*, mark size = 4pt, mark options = {fill=black}]
        coordinates {(1.5, 1)};


        \addplot+[only marks, blue, mark=x, mark size = 4pt]
        coordinates {(6, 3)(7, 2)(7.25, 1.5)(6.5, 3.5)(6.25, 1.25)(7.5, 1)(7.75,3)(7.25, 2.75)(8.25,1.25)(8.5,2.75)(8.75, 2)(9, 3)(8, 3.5)(7.75, 4)
        (5.75,4.75)(6.75, 3.75)(6.5,4.5)(7, 5.25)(7.75, 4.5)(6.25, 5.5)(7, 6.25)(6, 6.5)(8.5, 4.75)(7.5,6.25)(8.25, 7)
        };
        \draw[thick, ->] (19, 11) -- (55, 15);
        \node[anchor=south] at (50, 17){%
        $\vec d_1$
      };
      \draw[thick, ->] (16, 14) -- (50, 50);
        \node[anchor=south east] at (40, 40){%
        $\vec d_2$
      };
       \draw (15,10) circle (0.3cm);
        \end{axis}

    \end{tikzpicture}}
         \caption{Possible Recourse Directions}
         \label{fig:loan-example-directions}
     \end{subfigure}
     \begin{subfigure}[t]{0.3\textwidth}
         \centering
         \resizebox{0.9\textwidth}{!}{
         \begin{tikzpicture}
        \begin{axis}[
        xmin = 0, 
        xmax = 10, 
        ymin = 0, 
        ymax = 10, 
        ticks = none, 
        xlabel={Credit Score}, 
        ylabel={Bank Account Balance}, 
        legend pos=north east, 
        ]

        \addplot+[gray, no marks, dashed, domain = 0:5.4] {5*cos(deg((x-1.5)/2.5))};

        \addplot+[only marks, red, mark=triangle*, mark size = 4pt]
        coordinates {(1,1.5)(1.5, 1)(2,2)(2.75, 2.25)(3, 1.7)(1.7, 3)(1.5, 4)(3.5, 0.75)(4, 2.5)(1,3)};


        \addplot+[only marks, black, mark = triangle*, mark size = 4pt, mark options = {fill=black}]
        coordinates {(1.5, 1)(4, 1)(6, 2.5)};


        \addplot+[only marks, blue, mark=x, mark size = 4pt]
        coordinates {(6, 3)(7, 2)(7.25, 1.5)(6.5, 3.5)(6.25, 1.25)(7.5, 1)(7.75,3)(7.25, 2.75)(8.25,1.25)(8.5,2.75)(8.75, 2)(9, 3)(8, 3.5)(7.75, 4)
        (5.75,4.75)(6.75, 3.75)(6.5,4.5)(7, 5.25)(7.75, 4.5)(6.25, 5.5)(7, 6.25)(6, 6.5)(8.5, 4.75)(7.5,6.25)(8.25, 7)
        };
        \draw[thick, ->] (19, 10) -- (37, 10);
        \draw[thick, ->] (43, 12) -- (57, 22);

       \draw (15,10) circle (0.3cm);
        \end{axis}

    \end{tikzpicture}}
         \caption{Possible Stakeholder Path}
         \label{fig:loan-example-path}
     \end{subfigure}
     
    \caption{(Left) The training dataset that the loan approval algorithm uses, (Middle) Plotting possible recourse directions for Example \ref{ex:loan-recourse} and (Right) plotting a stakeholder path toward loan acceptance.}
    \label{fig:loan-example}
\end{figure*}

\begin{example}\label{ex:loan-recourse}
Consider a loan applicant whose application is rejected by an algorithm utilizing two factors: bank account balance and credit score. 
The loan applicant, i.e., the PoI, the algorithm, and the training dataset are described in \Cref{fig:loan-example-dataset}.
Several directions can change the PoI's value from $\triangle$ (red points with label $-1$) to $\times$ (blue points with label $+1$). 
For example, increasing the credit score while leaving the bank account balance unchanged eventually changes the applicant's label; increasing both bank balance and credit score (\Cref{fig:loan-example-directions}) results in a positive outcome. 
Suppose that we provide the applicant with both alternatives; the applicant makes some modifications and reapplies for a loan. 
If the application is rejected, we provide them with new directions, based on their current state, and repeat until the application is accepted. 
This results in a recourse \emph{path}, rather than a single direction (\Cref{fig:loan-example-path}).
\end{example}

We wish to ensure that our directions are robust; thus, even if the stakeholder somewhat deviates from our suggested path, they are still likely to secure a positive outcome. 
This also offloads some of the computational costs onto the stakeholder, which guarantees certain recourse properties that would otherwise require a significant amount of information and computational cost. 
Consider Example \ref{ex:loan-recourse}: after providing two different directions, the change that the applicant makes to their datapoint will likely be a minimum cost change that approximately follows one of the directions we propose in  Figure \ref{fig:loan-example-directions}. 
Even if the resulting change does not facilitate a change in outcome, or is incorrectly executed, we can still offer additional directions until a desirable outcome is obtained. 

\subsection{Data-driven Recourse}\label{subsec:model-driven-recourse}
Whether explanations should be computed based on the underlying model or the observed data is a highly debated topic \citep{chen2020truetomodel,barocas2020counterfactuals,janzing2020causality}.
While data-driven recourse methods exist \citep{poyiadzi2020face}, most work solely with the model of interest $f$ \citep{Mothilal2020Dice, wachter2017counterfactual, Karimi2020Mace}.

Recent work shows that explanations can perform poorly when they are inconsistent with the data manifold---i.e. the underlying distribution from which the data is drawn \citep{Frye2021DataManifold, Aas2021Shapley} and are vulnerable to manipulation \citep{slack2020fooling, slack2021counterfactual}. 
Ignoring the data manifold when computing counterfactuals can be even more pernicious, resulting in recourse recommendations that may not improve the outcome in any way, but simply move it outside the data manifold. 
To see why this is the case, consider a simplified, two dimensional instance as shown in Example \ref{ex:loan-recourse} but with a different point of interest (given in \Cref{fig:loan-example-2-dataset}). 
The shortest distance perturbation which crosses the decision boundary corresponds to the direction $\vec d_1$ (as given in \Cref{fig:loan-example-2-directions}). 
This direction suggests that the stakeholder decrease their credit score and marginally increase their bank account balance, which actively moves the stakeholder away from the positively classified points. 
An explanation algorithm that recognizes the data manifold will offer a direction similar to $\vec d_2$, arguably making the stakeholder a better candidate for loan approval.  
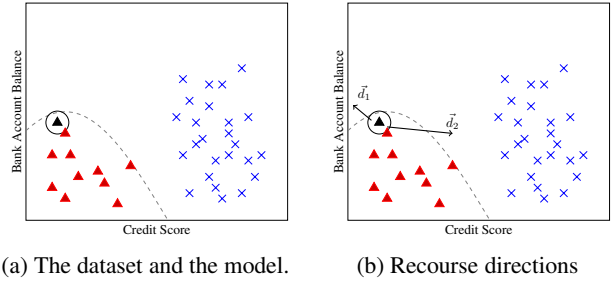
\begin{figure*}
    \centering
    \begin{subfigure}[b]{0.3\textwidth}
         \centering
         \resizebox{0.9\textwidth}{!}{
         \begin{tikzpicture}
        \begin{axis}[
        xmin = 0, 
        xmax = 10, 
        ymin = 0, 
        ymax = 10, 
        ticks = none, 
        xlabel={Credit Score}, 
        ylabel={Bank Account Balance}, 
        legend pos=north east, 
        ]

        \addplot+[gray, no marks, dashed, domain = 0:5.4] {5*cos(deg((x-1.5)/2.5))};

        \addplot+[only marks, red, mark=triangle*, mark size = 4pt]
        coordinates {(1,1.5)(1.5, 1)(2,2)(2.75, 2.25)(3, 1.7)(1.7, 3)(1.5, 4)(3.5, 0.75)(4, 2.5)(1,3)};


        \addplot+[only marks, black, mark = triangle*, mark size = 4pt, mark options = {fill=black}]
        coordinates {(1.2, 4.5)};


        \addplot+[only marks, blue, mark=x, mark size = 4pt]
        coordinates {(6, 3)(7, 2)(7.25, 1.5)(6.5, 3.5)(6.25, 1.25)(7.5, 1)(7.75,3)(7.25, 2.75)(8.25,1.25)(8.5,2.75)(8.75, 2)(9, 3)(8, 3.5)(7.75, 4)
        (5.75,4.75)(6.75, 3.75)(6.5,4.5)(7, 5.25)(7.75, 4.5)(6.25, 5.5)(7, 6.25)(6, 6.5)(8.5, 4.75)(7.5,6.25)(8.25, 7)
        };
       \draw (12,45) circle (0.3cm);
        \end{axis}

    \end{tikzpicture}}
         \caption{The dataset and the model.}
         \label{fig:loan-example-2-dataset}
     \end{subfigure}
     \begin{subfigure}[b]{0.3\textwidth}
         \centering
         \resizebox{0.9\textwidth}{!}{
         \begin{tikzpicture}
        \begin{axis}[
        xmin = 0, 
        xmax = 10, 
        ymin = 0, 
        ymax = 10, 
        ticks = none, 
        xlabel={Credit Score}, 
        ylabel={Bank Account Balance}, 
        legend pos=north east, 
        ]

        \addplot+[gray, no marks, dashed, domain = 0:5.4] {5*cos(deg((x-1.5)/2.5))};

        \addplot+[only marks, red, mark=triangle*, mark size = 4pt]
        coordinates {(1,1.5)(1.5, 1)(2,2)(2.75, 2.25)(3, 1.7)(1.7, 3)(1.5, 4)(3.5, 0.75)(4, 2.5)(1,3)};


        \addplot+[only marks, black, mark = triangle*, mark size = 4pt, mark options = {fill=black}]
        coordinates {(1.2, 4.5)};


        \addplot+[only marks, blue, mark=x, mark size = 4pt]
        coordinates {(6, 3)(7, 2)(7.25, 1.5)(6.5, 3.5)(6.25, 1.25)(7.5, 1)(7.75,3)(7.25, 2.75)(8.25,1.25)(8.5,2.75)(8.75, 2)(9, 3)(8, 3.5)(7.75, 4)
        (5.75,4.75)(6.75, 3.75)(6.5,4.5)(7, 5.25)(7.75, 4.5)(6.25, 5.5)(7, 6.25)(6, 6.5)(8.5, 4.75)(7.5,6.25)(8.25, 7)
        };
        \draw[thick, ->] (9, 46) -- (2, 53);
        \node[anchor=south west] at (2, 53){%
        $\vec d_1$
      };
      \draw[thick, ->] (15, 43) -- (40, 40);
        \node[anchor=south] at (40, 40){%
        $\vec d_2$
      };
       \draw (12,45) circle (0.3cm);
        \end{axis}

    \end{tikzpicture}}
         \caption{Recourse directions}
         \label{fig:loan-example-2-directions}
     \end{subfigure}
     
    \caption{(Left) The training dataset that the loan approval algorithm uses, and (Right) Off manifold $(\vec d_1)$ and On manifold $(\vec d_2)$ directions.}
    \label{fig:loan-example-2}
\end{figure*}

\section{Stepwise Explainable Paths (\Step)}\label{sec:direction-recourse}
We now present our approach for computing recourse directions. 
The overall method is summarized in \Cref{algo:step}. 
We first partition the dataset $\cal X$ into $k$ clusters $\{\cal X_1, \dots, \cal X_k\}$ (using  a standard clustering algorithm). 
For a point of interest $\poi$, we generate a direction $\vec d_c$ towards each cluster $X_c$ using the expression
\begin{align}
	\vec d_c = \sum_{\vec x' \in \cal X_c} (\vec x' - \poi) \alpha(\|\vec x' - \poi\|)\mathbbm{1}(f(\vec x') = 1) \label{eq:step-direction}
\end{align}
where $\alpha:\R_+ \to \R_+$ is some non-negative function and $\|.\|$ is a rotation invariant distance metric. 
We select directions using \Cref{eq:step-direction} (a similar formula is proposed by \citet{sliwinski2019mim}). The intuition behind this equation is as follows: for each point $\vec x'$ in the cluster $\cal X_c$, if $f(\vec x') = 1$, we `move' on the line $(\poi - \vec x'$) a distance of $\alpha(\|\vec x' - \poi\|)$, where $\alpha$ is a decreasing function of $\|\poi - \vec x'\|$. 
Thus, points closer to $\poi$ have a greater effect on $\vec d_c$; similarly, positively classified points that are close to each other will have a greater effect on $\vec d_c$.  
We offer these $k$ directions to the stakeholder, who then returns with a new point $\vec x'$ after following the recourse recommendations. 
The process is repeated until we produce a positively classified point (i.e. a counterfactual) or reach a user-specified maximum number of iterations. \Step's computational complexity is linear in the size of the dataset and is bounded by the maximum number of iterations.
\begin{algorithm}[t]
\caption{Stepwise Explainable Paths (\Step)}
\label{algo:step}
\begin{algorithmic}[1]
\Require Dataset $\cal X$ partitioned into $k$ clusters $\{\cal X_1, \dots, \cal X_k\}$, point of interest $\poi$, model $f$, some non-negatively valued function $\alpha: \R_{\ge 0} \mapsto \R_{\ge 0}$
    
\While{$f(\vec x) = -1$}
	\For{every cluster $c \in [k]$} 
	\Comment{{\tiny \textcolor{black!50!green}{Generate a direction $\vec d_c$ for each $c$}}} 
    \State $\vec d_c \gets \sum_{\vec x' \in \cal X_c} (\vec x' - \vec x) \alpha(\|\vec x' - \vec x\|) \mathbbm{1} (f(\vec x') = 1)$
	\EndFor
    \State Offer the directions $\{d_c\}_{c \in [k]}$ to the stakeholder 
    \State Stakeholder returns an updated point of interest $\vec x^*$
    \State $\vec x \gets \vec x^*$
\EndWhile

\end{algorithmic}
\end{algorithm}

\subsection{An Axiomatically Justified Direction}\label{subsec:step}
We axiomatically derive our choice of direction in \eqref{eq:step-direction}: we identify a direction that uniquely satisfies a set of desirable properties. 
More specifically, for a given point of interest $\vec x \notin \cal X_c$, we believe any reasonable recourse direction, denoted by $\vec d(\vec x, \cal X_c, f)$ should satisfy the following axioms:  
\begin{description}[leftmargin=0cm]
    \item[Shift Invariance (SI)] Let $\cal X_c + \vec b$ denote the dataset resulting from adding the vector $\vec b$ to each point in $\cal X$ and let $f_{\vec b}$ be a shifted model of interest such that $f_{\vec b}(\vec z) = f(\vec z - \vec b)$ for all $\vec z$. Then, $\vec d(\vec x, \cal X_c, f) = \vec d(\vec x + \vec b, \cal X_c + \vec b, f_{\vec b})$.
    \item[Rotation/Reflection Faithfulness (RRF)] Let $A$ be any matrix with $\det(A) \in \{-1, +1\}$ and let $A \cal X_c$ denote the dataset resulting from replacing every point $\vec x^j$ in $\cal X_c$ with $A \vec x^j$. Let $f_A$ denote a rotated model of interest such that $f_A(\vec z) = f(A^{-1} \vec z)$ for all $\vec z$. Then, $A \vec d(\vec x, \cal X_c, f) = \vec d(A\vec x, A\cal X_c, f_A)$.
    \item[Continuity (C)] $\vec d$ is a continuous function of the dataset $\cal X_c$. One can think of the continuity axiom as a notion of recourse \emph{robustness}: small changes to the input PoI will not cause significant changes to the resulting recourse. 
    \item[Data Manifold Symmetry (DMS)] Let $f$ and $g$ be two functions such that $f(\vec x^j) = g(\vec x^j)$ for all points $\vec x^j \in \cal X$. Then, we must have $\vec d(\poi, \cal X_c, f) = \vec d(\poi, \cal X_c, g)$.
    \item[Negative Classification Indifference (NCI)] Let $\vec x' \in \cal X_c$ be a datapoint with $f(\vec x') = -1$. Then, $\vec d(\vec x, \cal X_c, f) = \vec d(\vec x, \cal X_c\setminus \{\vec x'\}, f)$.
    \item[Positive Classification Monotonicity (PCM)] Let $\vec x' \notin \cal X_c$ be a point with $f(\vec x') = 1$ and $ x'_i > x_i$, then $\vec d(\vec x, \cal X_c, f) \le \vec d_i(\vec x, \cal X_c \cup \{\vec x'\}, f)$. Similarly, if $ x'_i < x_i$, then $\vec d_i(\vec x, \cal X_c, f) \ge \vec d_i(\vec x, \cal X_c \cup \{\vec x'\}, f)$.   
    
\end{description}
Our axioms are inspired by \citet{sliwinski2019mim}, who use a similar approach to characterize a family of direction-based explanations (\emph{Monotone Influence Measures}).
Unlike the MIM framework, we remove any dependency on negatively classified points (the Negative Classification Indifference axiom).
Without this change, a naive adaptation of MIM may output directions pointing away from all positively classified points.
Intuitively, a cluster of negative points near positive points may make it impossible to recommend any recourse, as the MIM framework ``shies away'' from negative point clusters (see details in \Cref{apdx:mim}).

The five remaining axioms are fundamental. 
Shift Invariance and Rotation/Reflection Faithfulness ensure the directions depend on the relative locations of the points rather than their absolute values. 
The RRF axiom also generalizes the \emph{feature symmetry} axiom: swapping the coordinates of features $i$ and $j$ does not change the value assigned to them; this is commonly used in the characterization of other model explanation frameworks \citep{Datta2015influence,Datta2016,patel2021high,lundberg2017unexpected,sliwinski2019mim}. 
In addition, these properties ensure the units in which we measure feature values have no effect on the outcome, e.g., measuring income in dollars rather than cents has no effect on the importance of income. 
Continuity ensures that the direction we pick is robust to small changes in the cluster $\cal X_c$. 
Data Manifold Symmetry (DMS) ensures that the direction depends on the model of interest only through points in the dataset. 
In other words, DMS ensures that the model's output on points outside the data manifold do not affect the output direction --- a desirable property in model explanations \citep{Frye2021DataManifold, lundberg2017unexpected,chen2020truetomodel}. 
Positive Classification Monotonicity (PCM) ensures that the direction will point towards regions with a large number of positively classified points. 
In other words, if there is a large number of positively classified points with a high value in some feature $i \in N$ (e.g. bank balance), PCM ensures that the output direction will ask the stakeholder to increase their bank balance.

Any direction which satisfies the above axioms is uniquely given by \Cref{eq:step-direction}: i.e. it is a weighted combination of the directions from the point of interest to every positively classified point in the dataset. The weight given to every point is a function of their distance $\|\vec x' -\poi\|$.
\begin{restatable}{theorem}{stepaxiom}\label{thm:step-axiomatization}
A recourse direction for a point of interest $\poi$ given a dataset $\cal X_c$, a model of interest $f$ and a rotation invariant distance metric $\|.\|$ satisfies SI, RRF, C, DMS, NCI and PCM if and only if it is given by \eqref{eq:step-direction}. 
\end{restatable}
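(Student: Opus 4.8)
The plan is to prove both implications, treating the ``if'' direction (the formula in \eqref{eq:step-direction} satisfies all six axioms) as routine verification and concentrating the real work on the ``only if'' direction (uniqueness). For the easy direction I would check each axiom against the explicit sum. SI holds because $(\vec x' + \vec b) - (\poi + \vec b) = \vec x' - \poi$ and $f_{\vec b}(\vec x' + \vec b) = f(\vec x')$, so every summand is unchanged; DMS and NCI are immediate, since the formula reads $f$ only through the labels $f(\vec x')$ of points $\vec x' \in \cal X_c$ and a point with $f(\vec x') = -1$ contributes $0$; PCM holds because adding a positive point $\vec x'$ increments the $i$-th coordinate by $(x_i' - x_i)\alpha(\|\vec x' - \poi\|)$, whose sign matches that of $x_i' - x_i$; RRF holds because $\|A(\vec x' - \poi)\| = \|\vec x' - \poi\|$ for every isometry $A$ of the rotation-invariant norm and $f_A(A\vec x') = f(\vec x')$; and C holds provided $\alpha$ is continuous, which is the only regularity I would need to impose on $\alpha$.

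For the uniqueness direction, suppose $\vec d$ satisfies all six axioms. I would first collapse the dependence on $f$: by DMS, $\vec d(\poi, \cal X_c, f)$ depends on $f$ only through its restriction to $\cal X_c$, and by NCI every negatively classified point may be deleted without changing $\vec d$. Hence $\vec d$ is a function of $\poi$ and the multiset of positively classified points $P = \{\vec x' \in \cal X_c : f(\vec x') = 1\}$ alone. Next, SI lets me translate so that $\poi = \vec 0$; write $D(P) := \vec d(\vec 0, P, f)$.

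The elegant core is the single-point case, handled by RRF. For $P = \{\vec p\}$ with $\vec p \neq \vec 0$, consider any isometry $A$ of the norm that fixes $\vec p$; since $AP = P$ and $A\vec 0 = \vec 0$, RRF gives $A\, D(\{\vec p\}) = D(\{\vec p\})$. In dimension $m \ge 2$ the stabilizer of $\vec p$ acts as the full orthogonal group on $\vec p^{\perp}$, so the only vectors fixed by every such $A$ lie on $\mathrm{span}(\vec p)$; hence $D(\{\vec p\}) = \alpha(\vec p)\,\vec p$ for a scalar $\alpha(\vec p)$. Applying RRF once more with an arbitrary isometry $A$ gives $\alpha(A\vec p) = \alpha(\vec p)$, so $\alpha$ depends only on $\|\vec p\|$, yielding $D(\{\vec p\}) = \alpha(\|\vec p\|)\,\vec p$, exactly as in \eqref{eq:step-direction}.

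The main obstacle is upgrading this single-point identity to the full sum, i.e. establishing additivity $D(P) = \sum_{\vec p \in P} \alpha(\|\vec p\|)\,\vec p$, since no axiom asserts additivity directly: PCM constrains only the \emph{sign} of a marginal increment and C only its regularity. I would proceed by induction on $|P|$, adding positive points one at a time and showing that the marginal contribution $\Delta(Q,\vec p) := D(Q\cup\{\vec p\}) - D(Q)$ of each new point $\vec p$ equals its single-point value $\alpha(\|\vec p\|)\,\vec p$ regardless of the background $Q$. RRF applied through the isometries fixing $\vec p$ pins the direction of any configuration symmetric about $\mathrm{span}(\vec p)$ to that line, PCM fixes the sign of the component of $\Delta(Q,\vec p)$ along $\vec p$, and Continuity is what lets me close the remaining quantitative gap, squeezing the magnitude of the increment between the bounds forced by the single-point case. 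The delicate point, on which the whole uniqueness direction rests, is showing that the components of $\Delta(Q,\vec p)$ transverse to $\vec p$ vanish for an \emph{arbitrary} background $Q$, and not merely on symmetric or orbit-averaged configurations; this is where the interplay of RRF, PCM and C must be orchestrated most carefully. Once additivity is secured, summing the single-point contributions gives exactly \eqref{eq:step-direction}, completing the proof.
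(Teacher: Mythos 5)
Your overall architecture matches the paper's: verify sufficiency directly, then for uniqueness strip out negatively classified points via NCI/DMS, settle the single-point case via SI and RRF, and induct on the number of positive points by analyzing the marginal increment of each added point. However, the two steps you yourself flag as ``delicate'' are precisely where your proposal stops short, and your proposed tools for them would not close the argument. First, the collinearity claim: you need that $\Delta(Q,\vec p) := \vec d(\vec x, Q\cup\{\vec p\},f)-\vec d(\vec x,Q,f)$ is a \emph{non-negative} multiple of $\vec p-\vec x$ for an \emph{arbitrary} background $Q$, and your stabilizer argument only handles backgrounds symmetric about $\mathrm{span}(\vec p-\vec x)$. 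The paper's Lemma~\ref{lem:axiom-useful} resolves this with a different, global argument: if the increment $\vec l$ were not a non-negative multiple of $\vec p - \vec x$, one chooses a rotation $A$ with $(A\vec l)_1<0$ while $[A(\vec p-\vec x)]_1>0$ (possible since either the vectors are linearly independent or $\vec l$ is a negative multiple), transports the whole configuration by RRF, and contradicts PCM applied to the first coordinate. No symmetry of $Q$ is needed; RRF moves $Q$ along with everything else.

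Second, the magnitude: you propose to ``squeeze'' the scalar using Continuity and the sign information from PCM, but PCM gives only a sign and C gives only regularity --- neither yields a quantitative bound, so there is nothing to squeeze between. The paper instead pins down the scalar with an intersection argument: for a dataset $\cal Y$ containing two points $\vec y,\vec z$ with $\vec y-\vec x$ and $\vec z-\vec x$ linearly independent, the lemma forces $\vec d(\vec x,\cal Y,f)$ to lie simultaneously on the line through $\vec d(\vec x,\cal Y\setminus\{\vec y\},f)$ in direction $\vec y-\vec x$ and on the line through $\vec d(\vec x,\cal Y\setminus\{\vec z\},f)$ in direction $\vec z-\vec x$; both base points are determined by the inductive hypothesis, the two lines meet in exactly one point, and that point is the claimed formula. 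Continuity is used only to dispose of the degenerate linearly dependent case by perturbation. Without some version of these two arguments your induction does not go through, so as written the uniqueness direction is incomplete. (A minor additional point: your single-point stabilizer argument yields a scalar of arbitrary sign; you still need PCM to conclude $\alpha\ge 0$.)
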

\begin{proof}
For readability, we replace $\cal X_c$ with $\cal X$.
It is easy to see that \eqref{eq:step-direction} satisfies all five axioms so we only show uniqueness.

We assume without loss of generality that $\cal X$ contains no negatively classified points. If $\cal X$ does contain any negatively classified points, we can simply remove them without changing the recourse direction because of Negative Classification Indifference (NCI). Therefore, our goal is to show that any direction which satisfies (SI), (RRF), (C), (DMS), (NCI) and (PCM) is of the form \begin{align*}
    \vec d(\vec x, \cal X, f) = \sum_{\vec x' \in \cal X} \alpha(\|\vec x' - \vec x\|)(\vec x' - \vec x)
\end{align*}
We start off with a useful lemma.

\begin{lemma}\label{lem:axiom-useful}
If any direction $\vec d$ satisfies Rotation and Reflection Faithfulness (RRF) and Positive Classification Monotonicity (PCM), then for any dataset $\cal X$, any datapoint $\poi$, any model of interest $f$ and any positively classified point $\vec y \ne \vec x$, there exists some $a \ge 0$ such that
\begin{align*}
    \vec d(\vec x, \cal X \cup \{\vec y\}, f) - \vec d(\vec x, \cal X, f) = a(\vec y - \vec x)
\end{align*}
\end{lemma}
\begin{proof}
Suppose for contradiction that there is some $\poi$, $\cal X$, $\vec y$ and $f$ such that 
\begin{align*}
    \forall a \ge 0: \vec d(\vec x, \cal X \cup \{y\}, f) - \vec d(\vec x, \cal X, f) \ne a(\vec y - \vec x)
\end{align*}
Let $\vec l = \vec d(\vec x, \cal X \cup \{y\}, f) - \vec d(\vec x, \cal X, f)$. Let $A$ be a rotation matrix such that $(Al)_1 < 0$ and $[A(\vec y - \vec x)]_1 > 0$; such a matrix exists since the two vectors are either linearly independent or $\vec l = -b(\vec y - \vec x)$ where $b \in \mathbb{R}^+$. Since $\vec d$ satisfies Rotation and Reflection Faithfulness (RRF), we have from $(Al)_1 < 0$
\begin{align*}
    \vec d_1(A\poi, A\cal X \cup \{A \vec y\}, f_A) - \vec d_1(A\poi, A\cal X, f_A) < 0
\end{align*}
 This contradicts the Positive Classification Monotonicity (PCM) property since $(A \vec y)_1 > (A \vec x)_1$ and $f_A(A \vec y) = f(\vec y) = 1$.
\end{proof}

Consider a direction $\vec d$ that satisfies the six desired axioms.
We go ahead and show uniqueness via induction on $|\cal X|$. Let $k = 0$, $\cal X = \{\}$. By Shift Invariance (SI), $\vec d(\poi, \{\}, f) = \vec d(\vec 0, \{\}, f_{-\vec x})$. The vector $\vec 0$ and an empty $\cal X$ are invariant under rotation. Therefore, since $\vec d$ satisfies Rotation and Reflection Faithfulness (RRF) and Data Manifold Symmetry (DMS), we must have $\vec d(\vec 0, \{\}, f_{-\vec x}) = \vec 0$, the only vector invariant under rotation.

Let $k = 1$, $\cal X = \{\vec y\}$ where $\vec y \ne \vec x$. Note that any pair of $(\vec x, \vec y)$ can be translated by shift and rotation to any other pair $(\vec x', \vec y')$ with the same distance $(\|\vec y - \vec x\|)$ between them.
This is because the distance metric $(\|.\|)$ is rotation invariant and any distance metric is shift invariant when computing the distance between two points; the shifts cancel each other out. 
Note that after rotation by some matrix $A$, we have $f_A(A \vec y) = f(\vec y)$ and similarly, after shift by some vector $\vec b$, we have $f_{\vec b}(\vec y + \vec b) = f(\vec y)$.
Therefore, the label of the point $y$ does not change after applying Shift Invariance (SI) or Rotation and Reflection Faithfulness (RRF). 
Therefore, by (SI), (RRF) and Lemma \ref{lem:axiom-useful}, we have 
\begin{align*}
    \vec d(\vec x, \cal X, f) = \alpha(\|\vec y - \vec x\|)(\vec y - \vec x)
\end{align*}
where $\alpha$ is a non-negative valued function.

Suppose the hypothesis holds when $|\cal X| \le k$. Consider a dataset $\cal Y$ of size $k+1$. This means $\cal Y$ contains at least two distinct points $\vec y, \vec z \ne \vec x$. We prove our hypothesis for the case where $\vec y$ and $\vec z$ are linearly independent. The case where they are linearly dependent follows from Continuity (C): we can peturb the vectors slightly to make them linearly independent. By Lemma \ref{lem:axiom-useful}, we have 
\begin{align}
    &\vec d(\poi, \cal Y, f) \in A = \{\vec d(\poi, \cal Y \setminus \{\vec y\}, f) + a(\vec y - \vec x)\} \notag\\
    \text{ and } &\vec d(\poi, \cal Y, f) \in B = \{\vec d(\poi, \cal Y \setminus \{\vec z\}, f) + a(\vec z - \vec x)\} \label{eq:step-proof}
\end{align}
By the inductive hypothesis, we have
\begin{align*}
    &\vec d(\poi, \cal Y \setminus \{\vec y\}, f) = \vec d(\poi, \cal Y \setminus \{\vec y, \vec z\}, f) + \alpha(\|\vec z - \vec x\|)(\vec z - \vec x) \\ 
    \text{ and } & \vec d(\poi, \cal Y \setminus \{\vec z\}, f) = \vec d(\poi, \cal Y \setminus \{\vec y, \vec z\}, f) + \alpha(\|\vec y - \vec x\|)(\vec y - \vec x)
\end{align*}
We can use this and the fact that $\vec y - \vec x$ and $\vec z - \vec x$ are linearly independent to combine the two sets in \eqref{eq:step-proof} to get
\begin{align*}
    \vec d(\vec x, \cal Y, f) &= A \cap B = \vec d(\poi, \cal Y \setminus \{\vec y, \vec z\}, f) + \alpha(\|\vec y - \vec x\|)(\vec y - \vec x) + \alpha(\|\vec z - \vec x\|)(\vec z - \vec x) 
\end{align*}
This completes the induction.
\end{proof}
Reducing stakeholder ``travel distance'' to reach a positive classification is a common objective used in algorithmic recourse \citep{Mothilal2020Dice, Karimi2020Mace, Mahajan2019Counterfactual}. 
This distance can be seen as a measure of the effort required from the stakeholder to change their outcome.
We can incorporate this in \Step using the choice of $\alpha$.
Any $\alpha$ function of the form $\alpha(z) = \frac1{z^k}$ with $k \ge 2$ ensures that nearby points are assigned a greater weight than far-off points. 
Different choices of $\alpha$ result in different directions.

\subsection{Diverse and On-Manifold Recourse via Clustering}\label{subsec:step-clustering}

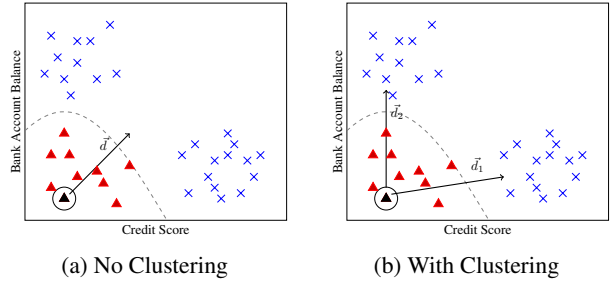
\begin{figure*}
    \centering
    \begin{subfigure}[b]{0.3\textwidth}
         \centering
         \resizebox{0.9\textwidth}{!}{
         \begin{tikzpicture}
        \begin{axis}[
        xmin = 0, 
        xmax = 10, 
        ymin = 0, 
        ymax = 10, 
        ticks = none, 
        xlabel={Credit Score}, 
        ylabel={Bank Account Balance}, 
        ]

        \addplot+[gray, no marks, dashed, domain = 0:5.4] {5*cos(deg((x-1.5)/2.5))};

        \addplot+[only marks, red, mark=triangle*, mark size = 4pt]
        coordinates {(1,1.5)(1.5, 1)(2,2)(2.75, 2.25)(3, 1.7)(1.7, 3)(1.5, 4)(3.5, 0.75)(4, 2.5)(1,3)};


        \addplot+[only marks, black, mark = triangle*, mark size = 4pt, mark options = {fill=black}]
        coordinates {(1.5, 1)};


        \addplot+[only marks, blue, mark=x, mark size = 4pt]
        coordinates {(6, 3)(7, 2)(7.25, 1.5)(6.5, 3.5)(6.25, 1.25)(7.5, 1)(7.75,3)(7.25, 2.75)(8.25,1.25)(8.5,2.75)(8.75, 2)(9, 3)(8, 3.5)(7.75, 4)
        (0.75,6.75)(1.75, 5.75)(1.5,6.5)(2, 7.25)(2.75, 6.5)(1.25, 7.5)(2, 8.25)(1, 8.5)(3.5, 6.75)(2.5,8.25)(3.25, 9)
        };

         \draw[thick, ->] (17, 12) -- (40, 40);
         \node[anchor=south] at (30, 30){$\vec d$};
       \draw (15,10) circle (0.3cm);
        \end{axis}
    \end{tikzpicture}}
        \caption{No Clustering}
         \label{fig:step-diversity}
     \end{subfigure}
     \begin{subfigure}[b]{0.3\textwidth}
         \centering
          \resizebox{0.9\textwidth}{!}{
         \begin{tikzpicture}
        \begin{axis}[
        xmin = 0, 
        xmax = 10, 
        ymin = 0, 
        ymax = 10, 
        ticks = none, 
        xlabel={Credit Score}, 
        ylabel={Bank Account Balance}, 
        ]

        \addplot+[gray, no marks, dashed, domain = 0:5.4] {5*cos(deg((x-1.5)/2.5))};

        \addplot+[only marks, red, mark=triangle*, mark size = 4pt]
        coordinates {(1,1.5)(1.5, 1)(2,2)(2.75, 2.25)(3, 1.7)(1.7, 3)(1.5, 4)(3.5, 0.75)(4, 2.5)(1,3)};


        \addplot+[only marks, black, mark = triangle*, mark size = 4pt, mark options = {fill=black}]
        coordinates {(1.5, 1)};


        \addplot+[only marks, blue, mark=x, mark size = 4pt]
        coordinates {(6, 3)(7, 2)(7.25, 1.5)(6.5, 3.5)(6.25, 1.25)(7.5, 1)(7.75,3)(7.25, 2.75)(8.25,1.25)(8.5,2.75)(8.75, 2)(9, 3)(8, 3.5)(7.75, 4)
        (0.75,6.75)(1.75, 5.75)(1.5,6.5)(2, 7.25)(2.75, 6.5)(1.25, 7.5)(2, 8.25)(1, 8.5)(3.5, 6.75)(2.5,8.25)(3.25, 9)
        };

         \draw[thick, ->] (17, 12) -- (60, 20);
         \node[anchor=south] at (50, 20){$\vec d_1$};
       \draw[thick, ->] (15, 15) -- (15, 60);
         \node[anchor=west] at (15, 50){%
         $\vec d_2$
       };
       \draw (15,10) circle (0.3cm);
        \end{axis}
    \end{tikzpicture}}
        \caption{With Clustering}
         \label{fig:step-issues-clustering}
     \end{subfigure}
    \caption{Directly applying the direction formula \eqref{eq:step-direction} without clustering to find recourse directions may result in undesirable behavior, e.g. excluding a variety of options and picking an off-manifold direction (Left). Clustering resolves this issue (Right).}
    \label{fig:step-issues}
\end{figure*}

Despite its theoretical soundness, utilizing \Step without clustering presents two drawbacks.

{\bf Lack of guaranteed diversity}~~In many cases, changing the $\alpha$ function does not result in a diverse set of directions even when many directions are possible. Consider a slight modification to Example \ref{ex:loan-recourse} (given by Figure \ref{fig:step-issues}) where the class of positive points is separated into two clusters. A recourse algorithm could reasonably output a direction towards either cluster as a potential recourse. However, \Cref{eq:step-direction} outputs a linear combination of these two clusters. Furthermore, it is easy to see that changing the function $\alpha$ will not significantly change the direction.

{\bf Off-manifold directions}~~Since Equation \eqref{eq:step-direction} aggregates directions, it is possible to obtain off-manifold directions (refer to \Cref{fig:step-diversity}); off-manifold regions may have high prediction error.
Without clustering, Equation \eqref{eq:step-direction} aggregates directions obtained for different parts of the dataset, rather than treating different data regions differently. 
We resolve both aforementioned concerns by \emph{clustering} the positively classified data points and compute directions per individual cluster. 
This ensures that we identify different clusters of points and aggregate each cluster in a theoretically sound manner. The impacts are demonstrated in \Cref{fig:step-issues}. 

\subsection{Privacy-Preserving Direction Selection}\label{sec:step-privacy}
One potential concern when using the dataset directly to compute recourse is that \Step could potentially leak sensitive user data (indeed, other model explanation algorithms have been shown to leak private information \cite{Milli2018,shokri2019privacy}). 
With clustering, it may not be possible to offer privacy guarantees since the clustering process itself may not be privacy preserving. 
However, we can show that the \Step distance computation itself is privacy-preserving (see \citet{dwork2014algorithmic} for a formal exposition to differential privacy). 
Briefly, an algorithm $\cal M$ that takes as input a dataset $\cal X$ and outputs a value $\cal M(\cal X)$ is said to be $(\epsilon, \delta)$-differentially private if for all $S \subseteq \text{range}(\cal M)$, we have:
\begin{align*}
    \Pr[\cal M(\cal X) \in S] \le \euler^{\epsilon}\Pr[\cal M(\cal X') \in S] + \delta
\end{align*}
where $\cal X$ and $\cal X'$ are any two datasets that differ by at most one datapoint. 
Differential privacy states that the output of $\cal M$ does not vary much by the removal of any data point.
A simple method to prove that a function is differentially private is to upper bound its {\em sensitivity}, i.e. the change to the function when adding a datapoint. \citet{dwork2014algorithmic} show that adding a finite amount of noise to a bounded sensitivity function guarantees differential privacy. 
A similar guarantee is offered for Shapley-based \citep{Datta2016}, and gradient-based \citep{patel2020explanation} explanations.

\Step (without clustering) can be made differentially private for a specific family of $\alpha$ functions, when the distance metric used is the $\ell_2$ norm. 
More formally, if we assume that $\alpha(z) \le \frac Cz$ for all $z > 0$, then we can bound the sensitivity of \Step.

\begin{restatable}{lemma}{lemstepsensitivity}\label{lem:step-privacy}
When the distance metric used is the $\ell_2$ norm 
and $\alpha(z) \le \frac Cz$ for all $z >0$, 
the global sensitivity (using the $\ell_2$ norm) of the direction given by \eqref{eq:step-direction} is upper bounded by $C$.
\end{restatable}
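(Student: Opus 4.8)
The plan is to reduce the sensitivity computation to a single per-point contribution and then bound that contribution directly using the assumed decay of $\alpha$. First I would recall the definition: the global $\ell_2$ sensitivity of the map $\cal X \mapsto \vec d(\poi, \cal X, f)$ is the supremum of $\|\vec d(\poi, \cal X, f) - \vec d(\poi, \cal X', f)\|_2$ over all pairs of datasets $\cal X, \cal X'$ that differ in exactly one datapoint.

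The key structural observation is that the direction in \eqref{eq:step-direction} is a sum of \emph{independent} per-point terms. Hence, writing $\cal X' = \cal X \cup \{\vec y\}$ (the removal case is symmetric), the difference telescopes to the single summand associated with $\vec y$:
\[
    \vec d(\poi, \cal X', f) - \vec d(\poi, \cal X, f) = (\vec y - \poi)\,\alpha(\|\vec y - \poi\|)\,\mathbbm{1}(f(\vec y) = 1).
\]
If $\vec y$ is negatively classified the indicator vanishes and the difference is zero, so it suffices to bound the positively classified case, where the indicator equals one.

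I would then compute the $\ell_2$ norm of this residual. Since $\alpha(\|\vec y - \poi\|)$ is a nonnegative scalar, it factors out of the norm:
\[
    \bigl\|(\vec y - \poi)\,\alpha(\|\vec y - \poi\|)\bigr\|_2 = \alpha(\|\vec y - \poi\|)\,\|\vec y - \poi\|_2.
\]
Setting $z := \|\vec y - \poi\|_2$ and noting that $z > 0$ (because $\poi \notin \cal X$, so every datapoint is distinct from $\poi$), the hypothesis $\alpha(z) \le \frac{C}{z}$ applies and yields $\alpha(z)\,z \le C$. Taking the supremum over all admissible $\vec y$ gives the claimed bound of $C$.

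I do not expect a substantial analytical obstacle here; the argument is essentially a one-line calculation once additivity is invoked. The only points demanding care are: (i) confirming that the additive form of \eqref{eq:step-direction} makes the neighboring-dataset difference reduce to a single summand, so that sensitivity is genuinely a per-point quantity and cannot accumulate across the dataset; and (ii) verifying the edge condition $z > 0$, which is exactly what legitimizes applying $\alpha(z) \le C/z$ and which holds precisely because $\poi \notin \cal X$. I would also remark that restricting to the $\ell_2$ norm is what lets the scalar $\alpha$ factor cleanly out of the norm and combine multiplicatively with $\|\vec y - \poi\|$ to cancel the $1/z$ decay.
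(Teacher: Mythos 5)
Your proposal is correct and matches the paper's own proof essentially step for step: reduce the neighboring-dataset difference to the single added summand, note it vanishes for negatively classified points, factor the scalar $\alpha$ out of the $\ell_2$ norm, and apply $\alpha(z)\,z \le C$. The only cosmetic difference is in the degenerate case $z=0$: you appeal to $\poi \notin \cal X$, while the paper simply observes that the contributed term is then the zero vector, so the sensitivity is $0$; both dispositions are fine.
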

\begin{proof}
Let the direction output of \eqref{eq:step-direction} (in the absence of clustering) for a particular dataset $\cal X$, a model of interest $f$ and a point of interest $\vec x$ be $\vec d(\vec x, \cal X, f)$.

The global sensitivity of \eqref{eq:step-direction} using the $l_2$ norm  is given by 
\begin{align}
   \Delta_2 \vec d = \max_{\vec x, \cal X, \cal X'} \|\vec d(\vec x, \cal X, f) - \vec d(\vec x, \cal X', f) \|_2 \label{eq:step-gs}
\end{align}
where $\cal X'$ is $\cal X$ with one additional (or one less) datapoint \citep{dwork2014algorithmic}. We can assume without loss of generality that $\cal X'$ contains one additional positively classified datapoint $\vec x'$. If the point is not positively classified, then none of the directions change and the sensitivity is $0$. 
The global sensitivity defined in \eqref{eq:step-gs} reduces to
\begin{align*}
    \Delta_2 \vec d &= \max_{\vec x, \vec x'} \|(\vec x' - \vec x) \alpha(\|\vec x' - \vec x \|_2)\mathbb{I}(f(\vec x') = 1)\|_2 &\le \max_{\vec x, \vec x'} \bigg \|(\vec x' - \vec x) \frac{C}{\|\vec x' - \vec x \|_2}\bigg \|_2 & (\text{by assumptions on $\alpha$ and $f(\vec x')$})\\
    &\le \max_{\vec x, \vec x'} \frac{1}{\|\vec x' - \vec x \|_2} \|C (\vec x' - \vec x)\|_2 = C
\end{align*}
In the first inequality, we assume $||\vec x - \vec x'||_2 > 0$ so we can apply $\alpha(||\vec x - \vec x'||_2) \le \frac{C}{||\vec x - \vec x'||_2}$. If $||\vec x - \vec x'||_2 = 0$, then by definition we must have $\vec x = \vec x'$ which implies $\Delta_2 \vec d = 0$ and the lemma trivially holds.
\end{proof}

Since the direction has bounded sensitivity, classic results from the differential privacy literature tell us that introducing Gaussian noise makes the direction $(\epsilon, \delta)$ differentially private.

\begin{restatable}{theorem}{thmstepprivacy}\label{thm:step-privacy}
When the distance metric used is the $\ell_2$ norm 
and $\alpha(z) \le \frac Cz$, 
the directions output by \eqref{eq:step-direction} can be made $(\epsilon, \delta)$-differentially private by adding Gaussian noise with $0$ mean and standard deviation $\sigma \ge \frac{\beta C^2}{\epsilon}$ where $\beta^2 > 2\log(\frac{1.25}{\delta})$ to all the features.
\end{restatable}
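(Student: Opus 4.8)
The plan is to invoke the standard Gaussian mechanism from the differential privacy literature, combining it with the sensitivity bound established in \Cref{lem:step-privacy}. The key observation is that \Cref{lem:step-privacy} already does the hard work: it shows that the $\ell_2$ global sensitivity $\Delta_2 \vec d$ of the direction given by \eqref{eq:step-direction} is at most $C$ under the assumption $\alpha(z) \le \frac Cz$. Once we have a bounded-sensitivity function, making it $(\epsilon, \delta)$-differentially private is a routine application of a well-known black-box result, so the main task is to match our setting to the hypotheses of that result and verify the stated noise level suffices.

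Concretely, I would recall the Gaussian mechanism guarantee of \citet{dwork2014algorithmic}: for a function $g$ with $\ell_2$ sensitivity $\Delta_2 g$, adding independent $\mathcal N(0, \sigma^2)$ noise to each output coordinate yields an $(\epsilon, \delta)$-differentially private mechanism provided $\sigma \ge \frac{\Delta_2 g \cdot \sqrt{2 \log(1.25/\delta)}}{\epsilon}$ (for $\epsilon \in (0,1)$). First I would substitute the sensitivity bound $\Delta_2 \vec d \le C$ from \Cref{lem:step-privacy} into this condition, yielding the sufficient requirement $\sigma \ge \frac{C\sqrt{2\log(1.25/\delta)}}{\epsilon}$. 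Then I would compare against the theorem's stated threshold $\sigma \ge \frac{\beta C^2}{\epsilon}$ with $\beta^2 > 2\log(1.25/\delta)$: since $\beta > \sqrt{2\log(1.25/\delta)}$ and $C^2 \ge C$ whenever $C \ge 1$ (or more generally after absorbing constants), the stated threshold dominates the sufficient one, so adding Gaussian noise at the prescribed level guarantees the claimed privacy.

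The main obstacle I anticipate is not mathematical depth but careful bookkeeping of the constant: the theorem writes $\frac{\beta C^2}{\epsilon}$ rather than the $\frac{\beta C}{\epsilon}$ one would naively expect from linearly substituting $\Delta_2 \vec d \le C$ into the Gaussian mechanism bound. I would need to reconcile this discrepancy—either the $C^2$ is a (conservative) typographical artifact that still yields a valid sufficient condition because $\frac{\beta C^2}{\epsilon} \ge \frac{\beta C}{\epsilon}$ for $C \ge 1$, or it reflects an implicit normalization. Since any noise level at least as large as the minimal sufficient threshold preserves $(\epsilon, \delta)$-privacy (more noise never hurts privacy), I would argue that the stated $\sigma \ge \frac{\beta C^2}{\epsilon}$ is sufficient whenever it exceeds the tight bound $\frac{C\sqrt{2\log(1.25/\delta)}}{\epsilon}$, which holds under the constant regime assumed. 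The remainder is simply citing the Gaussian mechanism theorem and confirming the per-coordinate noise addition matches its hypotheses, so no substantive new argument is required beyond \Cref{lem:step-privacy}.
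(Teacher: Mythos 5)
Your proposal takes essentially the same route as the paper: the paper's entire proof is the remark that the theorem is a direct application of the Gaussian mechanism \citep[Theorem 3.22]{dwork2014algorithmic} together with the sensitivity bound $\Delta_2 \vec d \le C$ from \Cref{lem:step-privacy}, which is precisely your argument. Your additional observation that the stated threshold $\sigma \ge \frac{\beta C^2}{\epsilon}$ carries an extra factor of $C$ relative to the mechanism's actual requirement $\sigma \ge \frac{C\sqrt{2\log(1.25/\delta)}}{\epsilon}$---so that it is a valid sufficient condition only when $\beta C \ge \sqrt{2\log(1.25/\delta)}$, e.g.\ for $C \ge 1$---is a legitimate catch that the paper does not address.
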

The proof of Theorem \ref{thm:step-privacy} is a direct application of \citet[Theorem 3.22]{dwork2014algorithmic} and is omitted.
Offering multiple recourse directions results in an additive increase in privacy cost.
More specifically, if we provide $k$ directions, and each direction is $(\epsilon, \delta)$ differentially private, then our mechanism is $(k\epsilon, k\delta)$ differentially private \citep{dwork2014algorithmic}.

\section{Empirical Evaluation \& Analysis}\label{sec:expts}\label{sec:experiments}
We compare the performance of \Step and three popular recourse methods---\DICE \citep{Mothilal2020Dice}, \FACE \citep{poyiadzi2020face} and \CCHVAE \citep{pawelczyk2020cchvae}---on three widely used datasets within counterfactual research using three base models. 
We also examine \Step's robustness to noise or ``user-interference'' when following recourse directions.
We provide an overview of our experimental setup here and include full details to support reproducibility in \Cref{apdx:experimental-details}. 
Recall that a counterfactual point for $\poi$ is a terminal point of a recourse path, $\poi_{\CF}$, such that $f(\poi_{\CF}) = 1$. 
  
{\bf Recourse Baselines}~~Given a negatively classified PoI $\poi$, \DICE solves an optimization problem that outputs a diverse set of counterfactuals. 
For each of these counterfactual points $\vec x_{\CF}$, $(\vec x_{\CF} - \poi)$ can be interpreted as a 
direction recommendation for the $\poi$. 
\FACE constructs an undirected graph over the set of datapoints and finds a path from the point of interest $\poi$ to a set of positively classified {\em candidate} points using Djikstra's algorithm \citep{dijkstra1959note}. 
Each edge in the path that connects $\poi$ to $\vec x_{\CF}$ can be thought of as a direction recommendation from $\poi$ to $\vec x_{\CF}$. The backend of \CCHVAE \citep{pawelczyk2020cchvae} is a variational autoencoder (VAE): distances within the latent space surrounding the PoI is used to identify counterfactual points. We use the author's implementation for \DICE and adapt \FACE's and \CCHVAE's implementations from \cite{poyiadzi2020face} and \cite{pawelczyk2021carla}.  

Given a negatively classified PoI $\poi$, \Step and \FACE produce a sequence of points $(\vec x^0, \vec x^1, \dots, \vec x^{\ell})$ where $\vec x^0$ is the original PoI, $\vec x^1$ is the point after following the first direction recommendation by the recourse method, and so on. 
\DICE and \CCHVAE produce two point sequences $(\vec x^0, \vec x^{\ell})$.
We refer to this sequence of points as a {\em recourse path}, and each of the directions can be referred to as a ``step''.

{\bf Datasets and Models}~~We employ three real-world datasets in our cross-comparison analysis: Credit Card Default \citep{yeh2009creditcarddefault}, Give Me Some Credit \citep{credit2011givemesomecredit}, and UCI Adult/Census Income~\citep{Kohavi1996Adult}, described in Table~\ref{tab:datasets}. For each dataset, we train and validate {\bf logistic regression}, {\bf random forest}, and {\bf two-layer DNN} model instances following a 70/15/15 training, validation, and test (recourse-time) data splits. 
Based on a balanced hyperparameter tuning across all recourse methods, we specify a confidence threshold of $0.7$ at test time for each base model to determine whether a PoI is positively classified. Implementation and hyperparameter tuning details are described in \Cref{apdx:models}.

\begin{table}[t!]
\small
\centering
\begin{tabular}{lll|lll}
\hline
    \multicolumn{1}{|l|}{\textbf{Dataset}} & \multicolumn{1}{l|}{\textbf{Type}} & \textbf{Classification task} & \multicolumn{1}{l|}{\textbf{LogReg}} & \multicolumn{1}{l|}{\textbf{RandForest}} & \multicolumn{1}{l|}{\textbf{DNN}} \\ \hline
    \multicolumn{1}{|l|}{Credit Card Default} & \multicolumn{1}{l|}{Financial} & payment default & \multicolumn{1}{l|}{1000} & \multicolumn{1}{l|}{535} & \multicolumn{1}{l|}{559} \\ \hline
    \multicolumn{1}{|l|}{Give Me Some Credit} & \multicolumn{1}{l|}{Financial} & financial distress & \multicolumn{1}{l|}{1000} & \multicolumn{1}{l|}{332} & \multicolumn{1}{l|}{426} \\ \hline
    \multicolumn{1}{|l|}{UCI Adult} & \multicolumn{1}{l|}{Demographic} & income \textgreater{}\$50k & \multicolumn{1}{l|}{1000} & \multicolumn{1}{l|}{1000} & \multicolumn{1}{l|}{1000} \\ \hline
\end{tabular}
\vspace{2pt}
\caption{Dataset descriptions. Values for logistic regression (LogReg), random forest (RandForest) and 2-layer DNN (DNN) reflect average number of negatively classified datapoints in the test set across 10 trials. We limit the number of datapoints used for recourse (from the test split) to 1000.} 
\label{tab:datasets}
\end{table}

{\bf Metrics \& Properties}~~In alignment with counterfactual and recourse literature, we use the following well-established metrics to evaluate the performance of \Step, \DICE, \FACE, and \CCHVAE on each base model and dataset \citep{verma2020counterfactual}. 
In the following definitions, we are given a recourse path $(\poi^0,\vec x^1,\dots,\vec x^\ell)$, where $\poi^0$ is the PoI $\vec x$. A recourse path is \emph{successful} if $f(\vec x^\ell)= 1$, i.e. the recommended path ultimately produced a counterfactual $\vec x^\ell = \vec x_{\CF}$. Our results report the average of these metrics over all PoIs.

{\bf Success} (or \emph{validity} \citep{verma2020counterfactual}) measures the proportion of PoIs with a successful path.

{\bf Average Success} measures the proportion of successful paths generated for a given PoI $\vec x$. 

{\bf $\ell_2$ Distance} is the Euclidean distance between the PoI $\vec x^0$ and the final point $\vec x^{\ell}$, i.e. $\lVert \poi^0 - \vec x^\ell \rVert_2$.
Distance is only computed for successful paths. 
Low distance, or \emph{proximal}, recourse, supports actionability.

{\bf Diversity} is the average Euclidean distances between the counterfactuals of each successful recourse path for a given PoI $\vec x$. 
Diversity is only computed for PoIs with least two successful recourse paths.

\subsection{Categorical Variables and Actionability}\label{subsec:categorical}
We provide rigorous support for categorical variables, including \emph{immutable}, \emph{semi-mutable}, \emph{ordinal} and \emph{unordered}.
Immutable features---ones that cannot be changed, e.g. \texttt{race}---are ignored during recourse. 
Semi-mutable features can only be changed in one direction, e.g., education level can only increase. 
Ordinal features, e.g. an ordered scale like ratings,
are encoded features using a Borda scale (from $1$ to $k$). The remaining categorical features are encoded via one-hot encoding. 
For recourse to be practically relevant, methods should offer \emph{actionable} directions~\citep{ustun2019actionable}: those which do not suggest infeasible changes or changes to immutable features, i.e. that the user can actually execute. \emph{Constraints} are a natural way of preventing recommendations that ask a user to perform an infeasible action, e.g. becoming younger or less educated to secure a loan. \Step encodes such constraints within its distance metric, in a manner similar to \DICE \citep{Mothilal2020Dice}. For each dataset and task, we implement all appropriate encodings and constraints so that the directions produced by \Step satisfy actionability desiderata and requirements. 
Refer to \Cref{apdx:datasets} for encodings by dataset.

\subsection{Comparative Analysis of \Step, \DICE, \FACE, and \CCHVAE}\label{subsec:holistic}

For each PoI, we generate $k=3$ recourse paths, repeat this over $10$ trials, and compute metrics and statistics based on the resulting counterfactuals. 
The comparison of \Step, \DICE, \FACE, and \CCHVAE on all base models are presented in \Cref{tab:all-holistic}. 
We introduce additional metrics and further discussion in \Cref{apdx:additional-holistic-analysis}. 

\begin{table*}[t]
\small
    \centering
    \begin{adjustbox}{max width=\textwidth}
        \begin{tabular}{clrrrrrrrrrrrrr}
        \toprule
                    \multicolumn{1}{c}{}&  & \multicolumn{4}{c}{\textbf{Logistic Regression}} & \multicolumn{4}{c}{\textbf{Random Forest}} & \multicolumn{4}{c}{\textbf{DNN}} \\
                    \cmidrule[0.75pt](lr){3-6}\cmidrule[0.75pt](lr){7-10}\cmidrule[0.75pt](lr){11-14}
                    \textbf{Dataset} & \textbf{Method} &              \rot{Success} & \rot{Avg Success} & \rot{$\ell_2$ Dist.} & \rot{Diversity} &        \rot{Success} & \rot{Avg Success} & \rot{$\ell_2$ Dist.} & \rot{Diversity} &     \rot{Success} & \rot{Avg Success} & \rot{$\ell_2$ Dist.} & \rot{Diversity} & \rot{\textbf{Max Error \%}}\\
        \cmidrule[0.75pt](lr){1-2}\cmidrule[0.75pt](lr){3-6}\cmidrule[0.75pt](lr){7-10}\cmidrule[0.75pt](lr){11-14}\cmidrule[0.75pt](lr){15-15}
        \multirow{4}{*}{\textbf{\thead{Credit\\Card\\Default}}} & StEP &                 1.00 &        0.91 &        7.06 &      2.58 &           1.00 &        0.84 &        3.20 &      0.95 &        1.00 &        1.00 &        5.04 &      1.29 & 8.51\\
                   & DiCE &                 1.00 &        1.00 &       35.28 &     12.93 &           1.00 &        1.00 &       20.34 &      8.35 &        0.99 &        0.99 &       32.47 &     13.02 & 2.96\\
                    & FACE &                 0.54 &        0.54 &        4.46 &      0.78 &           0.51 &        0.51 &        2.75 &      0.88 &        0.45 &        0.45 &        4.38 &      0.93 & 3.16\\
                    & CCHVAE &                 1.00 &        1.00 &        7.88 &      1.17 &           1.00 &        1.00 &        3.22 &      0.29 &        1.00 &        1.00 &        5.11 &      0.34 &5.03\\
        \cmidrule(lr){1-2}\cmidrule(lr){3-6}\cmidrule(lr){7-10}\cmidrule(lr){11-14}\cmidrule(lr){15-15}
        \multirow{4}{*}{\textbf{\thead{Give Me\\Some\\Credit}}} & StEP &                 0.98 &        0.70 &       15.32 &     10.73 &           1.00 &        0.74 &        4.03 &      9.91 &        1.00 &        0.99 &        5.87 &      2.15  &15.94\\
                    & DiCE &                 0.99 &        0.99 &      103.68 &     38.77 &           1.00 &        1.00 &       73.54 &     31.16 &        0.99 &        0.99 &       95.45 &     37.14  &3.33\\
                    & FACE &                 0.98 &        0.98 &        2.97 &      0.57 &           0.96 &        0.96 &        2.79 &      0.63 &        0.93 &        0.93 &        2.80 &      0.59  &0.98\\
                    & CCHVAE &                 0.06 &        0.06 &        2.24 &      0.13 &           1.00 &        1.00 &        1.40 &      0.02 &        1.00 &        1.00 &        4.18 &      0.03  &15.07\\
        \cmidrule(lr){1-2}\cmidrule(lr){3-6}\cmidrule(lr){7-10}\cmidrule(lr){11-14}\cmidrule(lr){15-15}
        \multirow{4}{*}{\textbf{\thead{UCI\\Adult}}} & StEP &                 1.00 &        0.54 &        2.39 &      1.37 &           0.89 &        0.47 &        4.93 &                   2.09 &        1.00 &        0.56 &        2.40 &      1.38  &4.01\\
                    & DiCE &                 1.00 &        1.00 &        6.75 &      1.76 &           1.00 &        1.00 &        7.57 &      1.35 &        1.00 &        1.00 &        7.03 &      1.60  &5.66\\
                    & FACE &                 0.63 &        0.63 &        3.02 &      0.74 &           0.63 &        0.63 &        2.77 &      0.77 &        0.64 &        0.64 &        3.02 &      0.76  &0.83\\
                    & CCHVAE &                 1.00 &        1.00 &        2.57 &      0.46 &           0.93 &        0.92 &        2.59 &      0.44 &        0.99 &        0.99 &        2.77 &      0.51  &8.48\\
            \cmidrule(lr){1-2}\cmidrule(lr){3-6}\cmidrule(lr){7-10}\cmidrule(lr){11-14}\cmidrule(lr){15-15}
                   \multicolumn{2}{r}{\textbf{Max Error \%}}&                 6.70 &        7.25 &        15.94 &      15.07 &           1.89 &        2.95 &        7.86 &      6.49 &        1.50 &        1.50 &        7.72 &      14.83 \\
            \bottomrule
            \end{tabular}
    \end{adjustbox}
    \caption{Comparative analysis results on all datasets and base models. Metrics are computed on scaled data and averaged over $10$ trials. We include max. standard error bounds for each metric by task and across tasks. 
    }
    \label{tab:all-holistic}
\end{table*}

{\bf \Step}~~For all tasks, \Step offers a balance between minimizing distance and maximizing diversity performance. 
\Step's lower distance recourse supports actionability---more proximal counterfactuals may be more actionable to the user---while providing variety in the suggested directions. 
We observe differences between distance and diversity---while maintaining a desirable trade-off between the two metrics---across base models while consistently performing well on success. 
On UCI Adult, \Step exhibits significantly different success compared to average success. 
Given that \Step produces a recourse path for each of its underlying $k$ clusters, average success may be reduced when one or more of these clusters contains many outliers. However, even in this setting, \Step produces successful counterfactuals for the remaining clusters. This supports \Step's robustness to outliers and generalization to the tail of the underlying data distribution.

\Step is also robust to the choice of base model, providing a consistent balance between proximal and diverse counterfactuals with high success. 
Even with a simple base model (logistic regression), \Step balances desirable metrics, while reducing $\ell_2$ distance under more complex yet lightweight models (Random Forest or DNN). 

{\bf \DICE}~~Across tasks, \DICE consistently produces successful recourse paths. Since \DICE selects counterfactuals by solving an optimization problem with a diversity objective, we unsurprisingly observe that the method consistently excels on this metric. 
This performance comes with a trade-off of much higher distance from the given PoI. 
This suggests that \DICE could be useful when highly varied recourse options are desirable, but at the cost of potentially inactionable recourse given the very high distance to the PoI (and therefore very significant changes to be made by the user).


{\bf \FACE}~~When \FACE produces successful recourse, it consistently performs well w.r.t. $\ell_2$ distance---in other words, its counterfactuals are close to the original PoI. In aggregate across datasets, however, \FACE's sensitivity to distance between datapoints in its underlying graph results in unreliable success performance. 
Our results suggest that increasing the nonlinearity of the base model does not assist \FACE in this dimension. Additionally, \FACE exhibits a significant trade-off between low distance between a PoI and its counterfactual(s) and diversity. For $k>1$, \FACE finds the closest counterfactual, an objective which is orthogonal to promoting diversity.  


{\bf \CCHVAE}~~Across tasks, \CCHVAE performs competitively on $\ell_2$ distance, but produces counterfactuals with very low diversity. 
\CCHVAE's objective relies on similarity via distance in the latent space of its VAE, so each of the successful $k$ recourse paths produces a counterfactual which is as close as possible to the original PoI. 
Across nearly all tasks, \CCHVAE excels in terms of success but exhibits a trade-off between proximal successful counterfactuals and diversity.

Despite these strengths, we observe a core weakness of \CCHVAE demonstrated by its surprisingly poor success on Give Me Some Credit under the logistic regression base model. To ensure that generated counterfactuals lie on the data manifold, for a given PoI, \CCHVAE's autoencoder approximates the conditional log likelihood of its mutable features given the immutable features. 
Therefore, for a given immutable value (e.g. a protected class like race or gender), if there are few datapoints with high-confidence positive predictions, \CCHVAE will generate counterfactuals based on low-confidence predictions. 
In this task using logistic regression, only $8\%$ of positively classified training examples had a feature value {\em age} $\leq 59$. This significant feature imbalance means that \CCHVAE generated counterfactuals with a confidence level of $< 0.7$ for these datapoints, demonstrating \CCHVAE's sensitivity to feature imbalance and risk in magnifying biased data.

Unlike with \Step, when \DICE, \FACE, and \CCHVAE are unsuccessful at producing recourse for a given PoI, they fail for all $k$ paths, resulting in equal success and average success.

\begin{table*}[t]
\small
    \centering
    \begin{adjustbox}{max width=\textwidth}
        \begin{tabular}{clrrrrrrrrrrrrr}
        \toprule
                    \multicolumn{1}{c}{}&  & \multicolumn{4}{c}{\textbf{Logistic Regression}} & \multicolumn{4}{c}{\textbf{Random Forest}} & \multicolumn{4}{c}{\textbf{DNN}} \\
                    \cmidrule[0.75pt](lr){3-6}\cmidrule[0.75pt](lr){7-10}\cmidrule[0.75pt](lr){11-14}
                    \textbf{Dataset} & \rot{\textbf{Noise ($\beta$)}} &              \rot{Success} & \rot{Avg Success} & \rot{$\ell_2$ Dist.} & \rot{Diversity} &        \rot{Success} & \rot{Avg Success} & \rot{$\ell_2$ Dist.} & \rot{Diversity} &     \rot{Success} & \rot{Avg Success} & \rot{$\ell_2$ Dist.} & \rot{Diversity} & \rot{\textbf{Max Error \%}}\\
        \cmidrule[0.75pt](lr){1-2}\cmidrule[0.75pt](lr){3-6}\cmidrule[0.75pt](lr){7-10}\cmidrule[0.75pt](lr){11-14}\cmidrule[0.75pt](lr){15-15}
        \multirow{4}{*}{\textbf{\thead{Credit\\Card\\Default}}} & 0.0 &                 1.00 &        0.91 &        7.06 &      2.58 &           1.00 &        0.84 &        3.20 &      0.95 &        1.00 &        1.00 &        5.04 &      1.29 &8.51\\
            & 0.1 &                 1.00 &        0.90 &        7.09 &      2.58 &           1.00 &        0.87 &        3.22 &      0.95 &        1.00 &        1.00 &        5.04 &      1.29 &8.51\\
            & 0.3 &                 1.00 &        0.92 &        7.10 &      2.57 &           1.00 &        0.90 &        3.20 &      0.96 &        1.00 &        1.00 &        5.02 &      1.30 &8.54\\
            & 0.5 &                 1.00 &        0.95 &        7.06 &      2.57 &           1.00 &        0.91 &        3.13 &      0.97 &        1.00 &        1.00 &        5.01 &      1.31 &8.64\\
            \cmidrule(lr){1-2}\cmidrule(lr){3-6}\cmidrule(lr){7-10}\cmidrule(lr){11-14}\cmidrule(lr){15-15}
        \multirow{4}{*}{\textbf{\thead{Give Me\\Some\\Credit}}} & 0.0 &                 0.98 &        0.70 &       15.32 &     10.73 &           1.00 &        0.74 &        4.03 &      9.91 &        1.00 &        0.99 &        5.87 &      2.15 &15.94\\
             & 0.1 &                 0.99 &        0.81 &       13.68 &      9.98 &           1.00 &        1.00 &        3.50 &      2.14 &        1.00 &        0.99 &        5.87 &      2.15 &14.90\\
            & 0.3 &                 0.99 &        0.90 &        9.95 &      7.73 &           1.00 &        1.00 &        2.83 &      1.69 &        0.96 &        0.95 &        3.90 &      2.15 &14.33\\
            & 0.5 &                 0.99 &        0.92 &        8.03 &      6.37 &           1.00 &        1.00 &        2.42 &      1.44 &        0.95 &        0.95 &        3.45 &      2.10 &13.09\\
        \cmidrule(lr){1-2}\cmidrule(lr){3-6}\cmidrule(lr){7-10}\cmidrule(lr){11-14}\cmidrule(lr){15-15}
        \multirow{4}{*}{\textbf{\thead{UCI\\Adult}}} & 0.0 &                 1.00 &        0.54 &        2.39 &      1.37 &           0.89 &        0.47 &        4.93 &      2.09 &        1.00 &        0.56 &        2.40 &      1.38 &4.01\\
            & 0.1 &                 1.00 &        0.53 &        2.35 &      1.36 &           0.89 &        0.47 &        4.93 &      2.07 &        1.00 &        0.55 &        2.35 &      1.37 &3.84\\
            & 0.3 &                 1.00 &        0.53 &        2.34 &      1.36 &           0.89 &        0.47 &        4.91 &      2.07 &        1.00 &        0.55 &        2.34 &      1.37 &3.94\\
            & 0.5 &                 1.00 &        0.53 &        2.32 &      1.35 &           0.89 &        0.47 &        4.86 &      2.06 &        1.00 &        0.55 &        2.31 &      1.36&3.96\\
            \cmidrule(lr){1-2}\cmidrule(lr){3-6}\cmidrule(lr){7-10}\cmidrule(lr){11-14}\cmidrule(lr){15-15}
                   \multicolumn{2}{r}{\textbf{Max Error \%}}&                  1.06 &        7.25 &        15.94 &      12.41 &           1.64 &        2.95 &        11.14 &      14.24 &        0.72 &        0.80 &        9.41 &      14.90 \\
            \bottomrule
            \end{tabular}
    \end{adjustbox}
    \caption{User-interference experiment results. Metrics are computed on scaled data and averaged over $10$ trials. Maximum standard error bounds for each metric by task and across tasks are included.}
   \label{tab:all-noise}
\end{table*}

\subsection{Practical Robustness Considerations for \Step} \label{subsec:robustness}
 
\paragraph{Clustering.} We evaluate \Step's sensitivity to the choice of underlying clusters across all tasks and base models. 
Firstly, using off-the-shelf \emph{k-means}, we vary $k$ between $\{1, \dots, 6\}$. 
Our empirical results in \Cref{apdx:clustering} show that \Step is robust to both the number and relative size of clusters when using $k$-means. 
We also evaluate \Step under {\bf random clustering}, by assigning each point to a random cluster value from $\{1, \dots, k\}$. 
Our metrics of interest slightly improve as $k$ increases. 
We hypothesize that this improvement stems from the finer-grained dataset partitioning, which allows \Step to produce a path (within the given maximum number of iterations) from the PoI to at least one cluster. 
Even under random clustering, \Step performs well on most metrics. 
Given the strict categorical constraints in the UCI Adult task, the center of each random cluster being distributed uniformly across the entire dataset results in some clusters being inherently less reachable for many PoIs.

\paragraph{User interference.}  
Algorithmic recourse models commonly assume that users exactly follow suggested actions. 
In \Cref{apdx:step-user-interference}, we relax this assumption by introducing a noise parameter $\beta$ as a proxy for how much a user deviates from the prescribed direction. 
We construct a noise vector where each dimension that represents a continuous feature in the dataset is independently sampled from the standard normal distribution and the remaining dimensions are zero. We scale this noise vector to magnitude $\beta \times \lVert \vec d \rVert$, where $\vec d$ is the original recourse vector and $\beta \in \mathbbm{R}_{\geq 0}^m$. The noise vector is then added to $\vec d$ as the next suggested action.

The results presented in \Cref{tab:all-noise} demonstrate \Step's robustness to user-interference and other noise. 
The \emph{improvements} in performance w.r.t. success, avg success, and $\ell_2$ distance in many cases can be interpreted as noise providing a benefit similar to small amounts of stochasticity in gradient methods (e.g. SGD, momentum), helping \Step move out of a local minima. 
We include additional results and discussion on this task in \Cref{apdx:step-user-interference}.
This case study provides considerable empirical evidence to \Step's robustness to deviation from the suggested recourse.

\section{Discussion and Conclusion} 
We introduce \Step, a data-driven method for direction-based algorithmic recourse that does not depend on the underlying model or knowledge of its underlying causal relations. 
We show that the directions computed by \Step uniquely satisfy a set of desirable properties, which can be made differentially private under some mild assumptions. 
We empirically demonstrate \Step's ability to produce actionable and diverse recourse, its robustness to user-interference, and its practical utility. 

In Section \ref{subsec:step-clustering}, we discuss the limitations of \Step without clustering and highlight the necessity of a clustering-based approach. Modulating $k$ controls the number of clusters (and, in turn, number of recourse directions produced); a clustering with too fine of a granularity may result in unsuccessful recourse paths for some of the clusters.
While standard clustering methods perform well in practice, a thorough analysis of their effects on \Step could provide useful insights in practical deployment settings.

An additional limitation of \Step is its dependence on how we measure distances between datapoints, as with \DICE, \FACE and Wachter's method. 
Defining new encoding schemas for categorical variables and investigating different notions of distance in the use of \Step and other recourse methods is an interesting area of future work.
Finally, \Step and other recourse methods raise questions regarding group fairness, e.g. whether \Step offers similar performance for users from different protected groups.  
\paragraph{Acknowledgements.}
The authors thank Neel Patel for early discussions. Perello and Zick are supported by Army Research Lab DEVCOM Data and Analysis Center - Contract W911QX23D0009.

\bibliographystyle{tmlr}
\bibliography{abb,literature}

\begin{thebibliography}{40}
\providecommand{\natexlab}[1]{#1}
\providecommand{\url}[1]{\texttt{#1}}
\expandafter\ifx\csname urlstyle\endcsname\relax
  \providecommand{\doi}[1]{doi: #1}\else
  \providecommand{\doi}{doi: \begingroup \urlstyle{rm}\Url}\fi

\bibitem[Aas et~al.(2021)Aas, Jullum, and Løland]{Aas2021Shapley}
Kjersti Aas, Martin Jullum, and Anders Løland.
\newblock Explaining individual predictions when features are dependent: More
  accurate approximations to shapley values.
\newblock \emph{Artificial Intelligence}, 298, 2021.

\bibitem[Barocas et~al.(2020)Barocas, Selbst, and
  Raghavan]{barocas2020counterfactuals}
Solon Barocas, Andrew~D. Selbst, and Manish Raghavan.
\newblock The hidden assumptions behind counterfactual explanations and
  principal reasons.
\newblock In \emph{Proceedings of the 3rd ACM Conference on Fairness,
  Accountability and Transparency (FAccT)}, pp.\  80--89, 2020.

\bibitem[{Biden Jr.}(2023)]{biden2023execorderAI}
{Joseph}~R. {Biden Jr.}
\newblock Executive order 14110: Safe, secure, and trustworthy development and
  use of artificial intelligence.
\newblock \emph{Federal Registrar}, 88\penalty0 (210):\penalty0 75191--75226,
  2023.

\bibitem[Chen et~al.(2020)Chen, Janizek, Lundberg, and
  Lee]{chen2020truetomodel}
Hugh Chen, Joseph~D. Janizek, Scott Lundberg, and Su-In Lee.
\newblock True to the model or true to the data?
\newblock In \emph{Proceedings of the 5th Annual Workshop on Human
  Interpretability in Machine Learning (WHI)}, 2020.

\bibitem[{Credit Fusion}(2011)]{credit2011givemesomecredit}
Will~Cukierski {Credit Fusion}.
\newblock Give me some credit, 2011.
\newblock URL \url{https://kaggle.com/competitions/GiveMeSomeCredit}.

\bibitem[Datta et~al.(2015)Datta, Datta, Procaccia, and
  Zick]{Datta2015influence}
Amit Datta, Anupam Datta, Ariel~D. Procaccia, and Yair Zick.
\newblock {Influence in Classification via Cooperative Game Theory}.
\newblock In \emph{Proceedings of the 24th International Joint Conference on
  Artificial Intelligence (IJCAI)}, pp.\  511--517, 2015.

\bibitem[Datta et~al.(2016)Datta, Sen, and Zick]{Datta2016}
Anupam Datta, Shayak Sen, and Yair Zick.
\newblock {Algorithmic Transparency via Quantitative Input Influence: Theory
  and Experiments with Learning Systems}.
\newblock In \emph{Proceedings of the 37th IEEE Conference on Security and
  Privacy (Oakland)}, pp.\  598--617, 2016.

\bibitem[Dijkstra(1959)]{dijkstra1959note}
Edsger~W Dijkstra.
\newblock A note on two problems in connexion with graphs.
\newblock \emph{Numerische mathematik}, 1:\penalty0 269--271, 1959.

\bibitem[Dwork \& Roth(2014)Dwork and Roth]{dwork2014algorithmic}
Cynthia Dwork and Aaron Roth.
\newblock The algorithmic foundations of differential privacy.
\newblock \emph{Foundations and Trends in Theoretical Computer Science},
  9\penalty0 (3--4):\penalty0 211--407, 2014.

\bibitem[Frye et~al.(2021)Frye, de~Mijolla, Begley, Cowton, Stanley, and
  Feige]{Frye2021DataManifold}
Christopher Frye, Damien de~Mijolla, Tom Begley, Laurence Cowton, Megan
  Stanley, and Ilya Feige.
\newblock Shapley explainability on the data manifold.
\newblock In \emph{Proceedings of the 9th International Conference on Learning
  Representations (ICLR)}, 2021.

\bibitem[Janzing et~al.(2020)Janzing, Minorics, and
  Bl{\"{o}}baum]{janzing2020causality}
Dominik Janzing, Lenon Minorics, and Patrick Bl{\"{o}}baum.
\newblock Feature relevance quantification in explainable {AI:} {A} causal
  problem.
\newblock In \emph{Proceedings of the 23rd International Conference on
  Artificial Intelligence and Statistics (AISTATS)}, pp.\  2907--2916, 2020.

\bibitem[Kanamori et~al.(2020)Kanamori, Takagi, Kobayashi, and
  Arimura]{Kentaro2020Dace}
Kentaro Kanamori, Takuya Takagi, Ken Kobayashi, and Hiroki Arimura.
\newblock Dace: Distribution-aware counterfactual explanation by mixed-integer
  linear optimization.
\newblock In \emph{Proceedings of the 29th International Joint Conference on
  Artificial Intelligence (IJCAI)}, pp.\  2855--2862, 2020.

\bibitem[Karimi et~al.(2020{\natexlab{a}})Karimi, Barthe, Balle, and
  Valera]{Karimi2020Mace}
Amir-Hossein Karimi, Gilles Barthe, Borja Balle, and Isabel Valera.
\newblock Model-agnostic counterfactual explanations for consequential
  decisions.
\newblock In \emph{Proceedings of the 23rd International Conference on
  Artificial Intelligence and Statistics (AISTATS)}, pp.\  895--905,
  2020{\natexlab{a}}.

\bibitem[Karimi et~al.(2020{\natexlab{b}})Karimi, von K\"{u}gelgen,
  Sch{\"o}lkopf, and Valera]{Karimi2020Imperfect}
Amir-Hossein Karimi, Julius von K\"{u}gelgen, Bernard Sch{\"o}lkopf, and Isabel
  Valera.
\newblock Algorithmic recourse under imperfect causal knowledge: a
  probabilistic approach.
\newblock In \emph{Proceedings of the 33rd Annual Conference on Neural
  Information Processing Systems (NeurIPS)}, pp.\  265--277,
  2020{\natexlab{b}}.

\bibitem[Karimi et~al.(2021)Karimi, Sch\"{o}lkopf, and
  Valera]{Karimi2021Recourse}
Amir-Hossein Karimi, Bernhard Sch\"{o}lkopf, and Isabel Valera.
\newblock Algorithmic recourse: From counterfactual explanations to
  interventions.
\newblock In \emph{Proceedings of the 4th ACM Conference on Fairness,
  Accountability and Transparency (FAccT)}, pp.\  353–--362, 2021.

\bibitem[Karimi et~al.(2022)Karimi, Barthe, Sch\"{o}lkopf, and
  Valera]{karimi2020survey}
Amir-Hossein Karimi, Gilles Barthe, Bernhard Sch\"{o}lkopf, and Isabel Valera.
\newblock A survey of algorithmic recourse: Contrastive explanations and
  consequential recommendations.
\newblock \emph{ACM Computing Surveys}, 55\penalty0 (5), dec 2022.

\bibitem[Kohavi(1996)]{Kohavi1996Adult}
Ron Kohavi.
\newblock Scaling up the accuracy of naive-bayes classifiers: A decision-tree
  hybrid.
\newblock In \emph{Proceedings of the 2nd International Conference on Knowledge
  Discovery and Data Mining (KDD)}, 1996.

\bibitem[Lundberg \& Lee(2017)Lundberg and Lee]{lundberg2017unexpected}
Scott~M. Lundberg and Su-In Lee.
\newblock {A Unified Approach to Interpreting Model Predictions}.
\newblock In \emph{Proceedings of the 30th Annual Conference on Neural
  Information Processing Systems (NeurIPS)}, pp.\  4768--4777, 2017.

\bibitem[Mahajan et~al.(2019)Mahajan, Tan, and
  Sharma]{Mahajan2019Counterfactual}
Divyat Mahajan, Chenhao Tan, and Amit Sharma.
\newblock Preserving causal constraints in counterfactual explanations for
  machine learning classifiers.
\newblock \emph{ArXiv}, abs/1912.03277, 2019.

\bibitem[Milli et~al.(2019)Milli, Schmidt, Dragan, and Hardt]{Milli2018}
Smitha Milli, Ludwig Schmidt, Anca~D. Dragan, and Moritz Hardt.
\newblock {Model Reconstruction from Model Explanations}.
\newblock In \emph{Proceedings of the 1st ACM Conference on Fairness,
  Accountability and Transparency (FAT*)}, pp.\  1--9, 2019.

\bibitem[Mothilal et~al.(2020{\natexlab{a}})Mothilal, Sharma, and
  Tan]{Mothilal2020Dice}
Ramaravind~K. Mothilal, Amit Sharma, and Chenhao Tan.
\newblock Explaining machine learning classifiers through diverse
  counterfactual explanations.
\newblock In \emph{Proceedings of the 3rd ACM Conference on Fairness,
  Accountability and Transparency (FAccT)}, pp.\  607–617,
  2020{\natexlab{a}}.

\bibitem[Mothilal et~al.(2020{\natexlab{b}})Mothilal, Sharma, and
  Tan]{mothilal2020explaining}
Ramaravind~K. Mothilal, Amit Sharma, and Chenhao Tan.
\newblock Explaining machine learning classifiers through diverse
  counterfactual examples.
\newblock In \emph{Proceedings of the 3rd ACM Conference on Fairness,
  Accountability and Transparency (FAccT)}, 2020{\natexlab{b}}.

\bibitem[{NTIA}(2023)]{ntia2023policy}
{NTIA}.
\newblock {AI} accountability policy request for comment.
\newblock \emph{Federal Register}, 88\penalty0 (71):\penalty0 22433--22441,
  April 2023.

\bibitem[Patel et~al.(2021)Patel, Strobel, and Zick]{patel2021high}
Neel Patel, Martin Strobel, and Yair Zick.
\newblock High dimensional model explanations: an axiomatic approach.
\newblock In \emph{Proceedings of the 4th ACM Conference on Fairness,
  Accountability and Transparency (FAccT)}, pp.\  401--411, 2021.

\bibitem[Patel et~al.(2022)Patel, Shokri, and Zick]{patel2020explanation}
Neel Patel, Reza Shokri, and Yair Zick.
\newblock Model explanations with differential privacy.
\newblock In \emph{Proceedings of the 2022 ACM Conference on Fairness,
  Accountability and Transparency (FAccT)}, pp.\  1895–1904, 2022.

\bibitem[Pawelczyk et~al.(2020)Pawelczyk, Broelemann, and
  Kasneci]{pawelczyk2020cchvae}
Martin Pawelczyk, Klaus Broelemann, and Gjergji Kasneci.
\newblock Learning model-agnostic counterfactual explanations for tabular data.
\newblock In \emph{Proceedings of the 26th International World Wide Web
  Conference (WWW)}, pp.\  3126–3132, 2020.

\bibitem[Pawelczyk et~al.(2021)Pawelczyk, Bielawski, Van~den Heuvel, Richter,
  and Kasneci]{pawelczyk2021carla}
Martin Pawelczyk, Sascha Bielawski, Johan Van~den Heuvel, Tobias Richter, and
  Gjergji. Kasneci.
\newblock Carla: A python library to benchmark algorithmic recourse and
  counterfactual explanation algorithms.
\newblock In \emph{Proceedings of the 34th Annual Conference on Neural
  Information Processing Systems (NeurIPS)Track on Datasets and Benchmarks},
  2021.

\bibitem[Poyiadzi et~al.(2020)Poyiadzi, Sokol, Santos-Rodriguez, De~Bie, and
  Flach]{poyiadzi2020face}
Rafael Poyiadzi, Kacper Sokol, Raul Santos-Rodriguez, Tijl De~Bie, and Peter
  Flach.
\newblock {FACE}: Feasible and actionable counterfactual explanations.
\newblock In \emph{Proceedings of the 3rd AAAI/ACM Conference on Artificial
  Intelligence, Ethics and Society (AIES)}, pp.\  344–350, 2020.

\bibitem[Ribeiro et~al.(2016)Ribeiro, Singh, and Guestrin]{Ribeiro2016should}
Marco~T. Ribeiro, Sameer Singh, and Carlos Guestrin.
\newblock {Why Should I Trust You?: Explaining the Predictions of Any
  Classifier}.
\newblock In \emph{Proceedings of the 22nd International Conference on
  Knowledge Discovery and Data Mining (KDD)}, pp.\  1135--1144, 2016.

\bibitem[Shokri et~al.(2021)Shokri, Strobel, and Zick]{shokri2019privacy}
Reza Shokri, Martin Strobel, and Yair Zick.
\newblock On the privacy risks of model explanations.
\newblock In \emph{Proceedings of the 4th AAAI/ACM Conference on Artificial
  Intelligence, Ethics and Society (AIES)}, 2021.

\bibitem[Simonyan et~al.(2013)Simonyan, Vedaldi, and
  Zisserman]{simonyan2013deep}
Karen Simonyan, Andrea Vedaldi, and Andrew Zisserman.
\newblock Deep inside convolutional networks: Visualising image classification
  models and saliency maps.
\newblock \emph{arXiv preprint arXiv:1312.6034}, 2013.

\bibitem[Slack et~al.(2020)Slack, Hilgard, Jia, Singh, and
  Lakkaraju]{slack2020fooling}
Dylan Slack, Sophie Hilgard, Emily Jia, Sameer Singh, and Himabindu Lakkaraju.
\newblock Fooling {LIME} and {SHAP}: Adversarial attacks on post hoc
  explanation methods.
\newblock In \emph{Proceedings of the 3rd AAAI/ACM Conference on Artificial
  Intelligence, Ethics and Society (AIES)}, pp.\  180--186, 2020.

\bibitem[Slack et~al.(2021)Slack, Hilgard, Lakkaraju, and
  Singh]{slack2021counterfactual}
Dylan Slack, Anna Hilgard, Himabindu Lakkaraju, and Sameer Singh.
\newblock Counterfactual explanations can be manipulated.
\newblock \emph{Proceedings of the 34th Annual Conference on Neural Information
  Processing Systems (NeurIPS)}, pp.\  62--75, 2021.

\bibitem[Sliwinski et~al.(2019)Sliwinski, Strobel, and Zick]{sliwinski2019mim}
Jakub Sliwinski, Martin Strobel, and Yair Zick.
\newblock {A Characterization of Monotone Influence Measures for Data
  Classification}.
\newblock In \emph{Proceedings of the 33rd AAAI Conference on Artificial
  Intelligence (AAAI)}, pp.\  718--725, 2019.

\bibitem[Ustun et~al.(2019{\natexlab{a}})Ustun, Spangher, and
  Liu]{Ustun2019Recourse}
Berk Ustun, Alexander Spangher, and Yang Liu.
\newblock Actionable recourse in linear classification.
\newblock In \emph{Proceedings of the 2nd ACM Conference on Fairness,
  Accountability and Transparency (FAccT)}, pp.\  10–19, 2019{\natexlab{a}}.

\bibitem[Ustun et~al.(2019{\natexlab{b}})Ustun, Spangher, and
  Liu]{ustun2019actionable}
Berk Ustun, Alexander Spangher, and Yang Liu.
\newblock Actionable recourse in linear classification.
\newblock In \emph{Proceedings of the 2nd ACM Conference on Fairness,
  Accountability and Transparency (FAT*)}, pp.\  10--19, 2019{\natexlab{b}}.

\bibitem[Verma et~al.(2020)Verma, Dickerson, and
  Hines]{verma2020counterfactual}
Sahil Verma, John Dickerson, and Keegan Hines.
\newblock Counterfactual explanations for machine learning: A review, 2020.

\bibitem[Wachter et~al.(2017)Wachter, Mittelstadt, and
  Floridi]{wachter2017counterfactual}
Sandra Wachter, Brent Mittelstadt, and Luciano Floridi.
\newblock Why a right to explanation of automated decision-making does not
  exist in the general data protection regulation.
\newblock \emph{International Data Privacy Law}, 7\penalty0 (2):\penalty0
  76--99, 2017.

\bibitem[Wachter et~al.(2018)Wachter, Mittelstadt, and
  Russell]{wachter2018counterfactual}
Sandra Wachter, Brent Mittelstadt, and Chris Russell.
\newblock Counterfactual explanations without opening the black box: automated
  decisions and the {GDPR}.
\newblock \emph{Harvard Journal of Law \& Technology}, 31, 2018.

\bibitem[Yeh \& Lien(2009)Yeh and Lien]{yeh2009creditcarddefault}
I-Cheng Yeh and Chehui Lien.
\newblock The comparisons of data mining techniques for the predictive accuracy
  of probability of default of credit card clients.
\newblock \emph{Expert Systems with Applications}, 36\penalty0 (2, Part
  1):\penalty0 2473--2480, 2009.

\end{thebibliography}

\newpage
\appendix

\onecolumn

\section{Issue with the Naive Adaptation of MIM}\label{apdx:mim}
MIM (proposed by \citet{sliwinski2019mim}) is a data driven approach to explain the outcome of point $\vec x$. Intuitively, it is a direction that moves towards points of the same outcome and moves away from points of the opposite outcome. It is given by:
\begin{align*}
    \vec d(\poi, \cal X, f) = \sum_{\vec x' \in \cal X} (\vec x' - \poi) \alpha(\|\vec x' - \poi \|)\mathbbm{1}(f(\vec x') = f(\vec x))
\end{align*}
where $\alpha$ is a non-negative valued function and $\mathbbm{1}(.)$ takes the value $1$ if the input condition is true and $0$ otherwise.
We can naively adapt MIM for recourse by flipping the direction, moving towards positively classified points and away from negatively classified points. This would give us the direction:
\begin{align*}
    \vec d(\poi, \cal X, f) = \sum_{\vec x' \in \cal X} (\vec x' - \poi) \alpha(\|\vec x' - \poi \|)\mathbbm{1}(f(\vec x') \ne f(\vec x))
\end{align*}
This direction, unfortunately, can lead to bad recourse recommendations. 
To see why, consider Example \ref{ex:loan-recourse} with a different dataset and point of interest (given in Figure \ref{fig:mim-issue}). When nearby points are given a higher weight than far away points, the direction output by MIM could be poor, pointing away from all the positively classified points. This is because when nearby points are given a higher weight than far away points, the push away from negatively classified points can be stronger than the pull towards positively classified points. This is indeed what happens in Figure \ref{fig:mim-issue}.

\begin{figure}
    \centering
         \begin{tikzpicture}
        \begin{axis}[ 
        xmin = 0, 
        xmax = 10, 
        ymin = 0, 
        ymax = 10, 
        ticks = none, 
        xlabel={Credit Score}, 
        ylabel={Bank Account Balance}, 
        legend pos=north east]
        
        \addplot+[gray, no marks, dashed, domain = 0:6] table[row sep = crcr]{5 0 \\ 5 10 \\};
        \addlegendentry{Boundary};
        \addplot+[only marks, red, mark=triangle*, mark size = 4pt, mark options = {fill=red}]
        coordinates {(2,1.5)(2.5, 1)(3,2)(3.75, 2.25)(4, 3)(2.7, 3)(3, 4)(3.5, 4.75)(2.75, 5)(3, 6)(4, 5.5)(2, 5.75)};
        
        \addlegendentry{Rejection}
        
        \addplot+[only marks, black, mark = triangle*, mark size = 4pt, mark options = {fill=black}]
        coordinates {(2, 4)};

        \addlegendentry{POI}
        
        \addplot+[only marks, blue, mark=x, mark size = 4pt]
        coordinates {(7,1.5)(7.5, 1)(7,2)(7.75, 2.25)(8, 3)(6.7, 3)(7, 4)(6.75, 5)(7, 6)(8, 5.5)(6, 5.75)
        (6.5,5.5)
        };
        \addlegendentry{Acceptance}

        \draw[thick, ->] (18, 40) -- (5, 40);
        \node[anchor= north] at (10, 40){%
        $\vec d_{\textit{MIM}}$
      };
       \draw (20,40) circle (0.3cm);
        \end{axis}
        
    \end{tikzpicture}

    \caption{An example where the MIM direction points away from all the positively classified points.}
    \label{fig:mim-issue}
\end{figure}
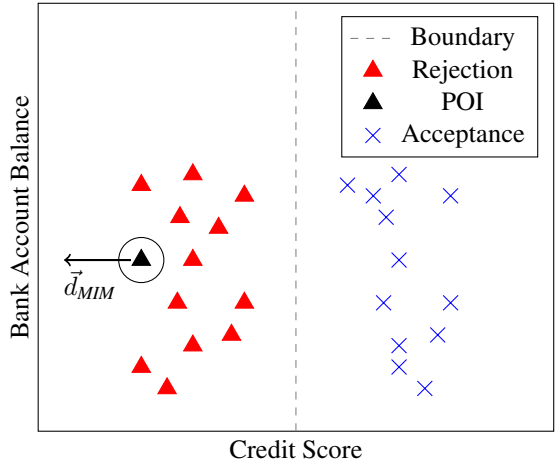

\section{Additional Empirical Investigation} \label{apdx:additional-holistic-analysis}

We compare \Step with three methods, \DICE, \FACE, and \CCHVAE by generating $k=3$ recourse instructions for each negatively classified datapoint in the test set for each dataset. 

With \Step, we first partition the positively labeled training data into $3$ clusters, and then for each PoI in the test set, we produce a direction for each of the $3$ clusters. In these comparative experiments, we use sci-kit learn's default k-means implementation without any tuning, and we assume the the stakeholder follows the provided direction exactly.

\subsection{Baseline Recourse Methods}\label{apdx:baselines}
We compare \Step to \DICE \citep{Mothilal2020Dice}, \FACE \citep{poyiadzi2020face}, and \CCHVAE \cite{pawelczyk2020cchvae}. \DICE outputs a diverse set of points, \FACE outputs a set of paths that terminate at a positively classified point, and \CCHVAE outputs points in the latent neighborhood of the PoI using an autoencoder. 

Given a point $\vec x$ such that $f(\vec x) = -1$, \DICE uses determinantal point processes and solves the following optimization problem to output a diverse set of $m$ counterfactual explanations $\{\vec c_1, \vec c_2, \dots, \vec c_m\}$:
\begin{align*}
    \argmin_{\vec c_1, \vec c_2, \dots, \vec c_m} \frac1m \sum_{i = 1}^m \textit{loss}(f(\vec c_i), 1) + \frac{\lambda_1}m \sum_{i = 1}^m \textit{dist}(\vec c_i, \vec x) \\ - \lambda_2 \text{dpp\_diversity}(\vec c_1, \vec c_2, \dots, \vec c_m) 
\end{align*}
Each of the output points $\vec c_j$ can be seen as a recommendation for the direction $\vec c_j - \vec x$. Due to the structure of the optimization problem, none of these output points are guaranteed to be positively classified. We run \DICE with the default hyperparameter settings. 

FACE constructs an undirected graph over the set of data points and finds a path from the point of interest to a set of positively classified {\em candidate} points using Djikstra's algorithm \citep{dijkstra1959note}. All the results generated in this paper use a distance threshold of $3$ and the max length of a path is capped at $50$ data points. 
 
\CCHVAE utilizes a variational autoencoder to construct a low-dimensional neighborhood within a radius around the PoI and searches for counterfactuals within it. At each iteration the radius to search around gets expanded by a given step distance hyperparameter. We run \CCHVAE with a step distance of $1$ and set the max number of iteration to $50$. For the remaining hyperparameters, we use the default given by \cite{pawelczyk2021carla}. We modify \citeauthor{pawelczyk2021carla}'s implementation for \CCHVAE to support non-binary categorical features and $k>1$ counterfactuals.

\subsection{Additional Comparative Analysis}\label{apdx:additional-comparative-analysis}
We define three appropriate metrics, {\em path length}, {\em path steps}, and {\em proximal diversity} motivated by our direction-based algorithm.

\noindent{\bf Path length} is the sum of Euclidean distances between the steps of a recourse path. 
A shorter path length requires stakeholders to work less to change their outcome.
Formally, the path length is given by $\sum_{i=1}^{\ell} \lVert \vec x^{i} - \vec x^{i-1}\rVert_2$.
Path lengths are only computed for successful paths.

\noindent {\bf Path steps} corresponds to the number of steps taken in the path, given by $\ell - 1$. When the number of steps is $1$ (as is with \DICE and \CCHVAE), distance and path length are the same. 

\noindent {\bf Proximal Diversity} (Prox. Diver.) is the total Euclidean distances between the counterfactual points of each successful recourse path for a given PoI $\vec x$ normalized by distance from the furthest terminal point to the PoI, $$\frac{1}{\max_{i \in 1,\ldots,k} \lVert \poi^0 - \vec x^\ell_i \rVert_2}\sum_{i=1}^k\sum_{j=i}^k\lVert \vec x^\ell_i - \vec x^\ell_j \rVert_2.$$
To provide diverse recourse one can simply provide counterfactual points significantly distant from each other and the PoI. This makes recourse less actionable and the metric value less interpretable, therefore we normalize by max distance (proximity) between the PoI and its counterfactuals to scale this.
Proximal diversity is only computed for PoIs where at least two recourse paths are successful.

The results for these new metrics are presented in Table \ref{tab:all-holistic-additional}. Proximal diversity reflects our observations on distance and diversity in \Cref{subsec:holistic}. That is, \Step has diverse and close in distance recourse and \DICE has highly diverse and far in distance recourse. Proximal diversity weighs these two aspects, resulting in similar performance between the two methods. Given that \DICE and \CCHVAE do not output points between the PoI and counterfactual, path steps was always one step.

\begin{table*}[t]
    \centering
    \begin{adjustbox}{max width=\textwidth}
        \begin{tabular}{clrrrrrrrrrrrrr}
        \toprule
                    \multicolumn{1}{c}{}&  & \multicolumn{4}{c}{\textbf{Logistic Regression}} & \multicolumn{4}{c}{\textbf{Random Forest}} & \multicolumn{4}{c}{\textbf{DNN}} \\
                    \cmidrule[0.75pt](lr){3-6}\cmidrule[0.75pt](lr){7-10}\cmidrule[0.75pt](lr){11-14}
                    \textbf{Dataset} & \textbf{Method} &              \rot{Success} & \rot{Path Length} & \rot{Path Steps} & \rot{Prox. Diver.} &        \rot{Success} & \rot{Path Length} & \rot{Path Steps} & \rot{Prox. Diver.} &     \rot{Success} & \rot{Path Length} & \rot{Path Steps} & \rot{Prox. Diver.} & \rot{\textbf{Max Error \%}}\\
        \cmidrule[0.75pt](lr){1-2}\cmidrule[0.75pt](lr){3-6}\cmidrule[0.75pt](lr){7-10}\cmidrule[0.75pt](lr){11-14}\cmidrule[0.75pt](lr){15-15}
        \multirow{4}{*}{\textbf{\thead{Credit\\Card\\Default}}} & StEP &                 1.00 &        7.23 &       7.42 &      2.49 &           1.00 &        3.70 &       3.75 &      2.22 &        1.00 &        5.04 &       5.13 &      2.09 & 6.88\\
            & DiCE &                 1.00 &       35.28 &       1.00 &      2.43 &           1.00 &       20.34 &       1.00 &      2.23 &        0.99 &       32.47 &       1.00 &      2.64 & 2.96\\
            & FACE &                 0.54 &        5.61 &       2.18 &      2.02 &           0.51 &        3.68 &       1.56 &      2.49 &        0.45 &        5.04 &       2.06 &      2.16 & 1.75\\
            & CCHVAE &                 1.00 &        7.88 &       1.00 &      0.96 &           1.00 &        3.22 &       1.00 &      0.76 &        1.00 &        5.11 &       1.00 &      0.57  & 3.53\\
        \cmidrule(lr){1-2}\cmidrule(lr){3-6}\cmidrule(lr){7-10}\cmidrule(lr){11-14}\cmidrule(lr){15-15}
        \multirow{4}{*}{\textbf{\thead{Give Me\\Some\\Credit}}} & StEP &                 0.98 &       15.86 &      15.86 &      2.41 &           1.00 &        4.12 &       4.12 &      2.09 &        1.00 &        5.87 &       5.87 &      2.19 & 15.42\\
            & DiCE &                 0.99 &      103.68 &       1.00 &      2.34 &           1.00 &       73.54 &       1.00 &      2.28 &        0.99 &       95.45 &       1.00 &      2.38 & 3.33\\
            & FACE &                 0.98 &        2.72 &       1.07 &      1.92 &           0.96 &        2.80 &       1.01 &      2.01 &        0.93 &        2.81 &       1.01 &      1.91 & 1.13\\
            & CCHVAE &                 0.06 &        2.24 &       1.00 &      0.13 &           1.00 &        1.40 &       1.00 &      0.15 &        1.00 &        4.18 &       1.00 &      0.13 & 15.07\\
        \cmidrule(lr){1-2}\cmidrule(lr){3-6}\cmidrule(lr){7-10}\cmidrule(lr){11-14}\cmidrule(lr){15-15}
        \multirow{4}{*}{\textbf{\thead{UCI\\Adult}}} & StEP &                 1.00 &        2.70 &       2.70 &      2.48 &           0.89 &        5.35 &       5.46 &      2.46 &        1.00 &        2.57 &       2.65 &      2.47 & 3.98\\
            & DiCE &                 1.00 &        6.75 &       1.00 &      1.65 &           1.00 &        7.57 &       1.00 &      1.20 &        1.00 &        7.03 &       1.00 &      1.43 & 3.98\\
            & FACE &                 0.63 &        4.29 &       1.87 &      2.14 &           0.63 &        3.83 &       1.72 &      2.41 &        0.64 &        4.18 &       1.84 &      2.16 & 0.83\\
            & CCHVAE &                 1.00 &        2.57 &       1.00 &      1.24 &           0.93 &        2.69 &       1.00 &      1.24 &        0.99 &        2.83 &       1.00 &      1.35 & 5.19\\
            \cmidrule(lr){1-2}\cmidrule(lr){3-6}\cmidrule(lr){7-10}\cmidrule(lr){11-14}\cmidrule(lr){15-15}
                   \multicolumn{2}{r}{\textbf{Max Error \%}}&                 6.70 &        15.42 &        15.42 &      15.07 &           2.21 &        7.58 &        7.58 &      5.13 &        1.50 &        7.72 &        7.72 &      6.88 \\
            \bottomrule
            \end{tabular}
    \end{adjustbox}
    \caption{Comparative analysis results on all datasets and base models. Metrics are computed on scaled data and averaged over $10$ trials. The maximum standard error bounds for each metric by task (row) and across tasks (column) are included.
    }
    \label{tab:all-holistic-additional}
\end{table*}

\subsection{Discussion of \DICE}\label{apdx:dice-discuss}
Since \DICE is not a direction-based method, we generate $k$ counterfactual points and interpret the difference between the original PoI $\poi$ and counterfactual $\vec x_{\CF}$ as a proxy for the path to take. In our experiments, \DICE almost always generates a valid counterfactual explanation but this counterfactual is usually much farther from the PoI than \Step on average, which requires the user to make very large changes at each step of recourse. Furthermore, sometimes the counterfactual can be away from the data manifold.

\subsection{Discussion of \FACE}\label{apdx:face-discuss}
\FACE is a direction-based recourse method which finds the shortest path from each PoI $\poi$ to $k$ positively classified candidate points that exist within the graph (i.e.\ counterfactuals). The existence of edges between points in a graph are determined using a {\em distance threshold} parameter. If two points have a distance less than the distance threshold, an edge is generated; otherwise, no edge spans the two points. Intuitively, the distance threshold determines the size of the step users are required to take when following \FACE's recourse. 

Finding an appropriate distance threshold for \FACE is a challenging and highly task-dependent requirement. If the distance threshold is too small, the graph becomes sparse and typically produces few to no successful recourse paths. On the other hand, if the distance threshold is very large, the graph becomes dense and recourse generates trivial paths between PoIs and candidate points consisting of a single edge. In this case, \FACE produces a brittle recourse. 

One possible option for the distance threshold is to set it equal to the step-size of \Step (equal to $1$) to ensure a fair comparison of the two methods, which results in very few successful paths being generated (in some tasks, with a success rate as low as $0$). When increasing the distance threshold between $2$ to $3$, \FACE begins non-trivial paths across our tasks. All the results generated in this paper are using the distance threshold $3$. 

\subsection{Discussion of \CCHVAE}\label{apdx:cchvae-discuss}
\CCHVAE uses both the encoder and decoder from a variational autoencoder trained on the given dataset. The encoder transforms data points into a low-dimensional representation and the decoder reconstructs latent representations to the original dimension. During it's search, \CCHVAE encodes the PoI and samples points around the hyper-sphere around it. These points are then decoded and tested on the base model to determine if they are counterfactuals. In each iteration, \CCHVAE expands the hyper-sphere it samples around by a given step distance hyperparamter. To find $k$ counterfactuals, we continue the algorithm until $k$ counterfactuals are found via sampling.

In most tasks, this iterative process of expanding the search radius step-by-step aids \CCHVAE in producing low distance counterfactuals with high success.
However, the encoding and decoding steps lead \CCHVAE to be sensitive to strong relationships between mutable and immutable features, leading to failures in certain tasks as described in \Cref{subsec:holistic}. The choice of number of iterations, step distance, or number of $k$ counterfactuals did not affect the poor success on Give Me Some Credit under logistic regression. 
We recommend careful dataset selection when using \CCHVAE, as the relation between immutable and mutable features can be and is often discriminatory.

\begin{table*}[t]
    \centering
    \begin{adjustbox}{max width=\textwidth}
        \begin{tabular}{clrrrrrrrrrrrrr}
        \toprule
                    \multicolumn{1}{c}{}&  & \multicolumn{4}{c}{\textbf{Logistic Regression}} & \multicolumn{4}{c}{\textbf{Random Forest}} & \multicolumn{4}{c}{\textbf{DNN}} \\
                    \cmidrule[0.75pt](lr){3-6}\cmidrule[0.75pt](lr){7-10}\cmidrule[0.75pt](lr){11-14}
                    \textbf{Dataset} & \rot{\textbf{Noise ($\beta$)}} &              \rot{Success} & \rot{Path Length} & \rot{Path Steps} & \rot{Prox. Diver.} &        \rot{Success} & \rot{Path Length} & \rot{Path Steps} & \rot{Prox. Diver.} &     \rot{Success} & \rot{Path Length} & \rot{Path Steps} & \rot{Prox. Diver.} & \rot{\textbf{Max Error \%}}\\
        \cmidrule[0.75pt](lr){1-2}\cmidrule[0.75pt](lr){3-6}\cmidrule[0.75pt](lr){7-10}\cmidrule[0.75pt](lr){11-14}\cmidrule[0.75pt](lr){15-15}
        \multirow{4}{*}{\textbf{\thead{Credit\\Card\\Default}}} & 0.0 &                 1.00 &        7.23 &       7.42 &      2.49 &           1.00 &        3.70 &       3.75 &      2.22 &        1.00 &        5.04 &       5.13 &      2.09 &6.88\\
             & 0.1 &                 1.00 &        7.43 &       7.73 &      2.49 &           1.00 &        4.26 &       4.31 &      2.22 &        1.00 &        5.06 &       5.15 &      2.09 &6.82\\
            & 0.3 &                 1.00 &        7.91 &       8.34 &      2.49 &           1.00 &        4.75 &       4.85 &      2.25 &        1.00 &        5.19 &       5.33 &      2.09 &6.66\\
            & 0.5 &                 1.00 &        8.36 &       8.93 &      2.48 &           1.00 &        4.73 &       4.92 &      2.28 &        1.00 &        5.44 &       5.69 &      2.11 &6.41\\
            \cmidrule(lr){1-2}\cmidrule(lr){3-6}\cmidrule(lr){7-10}\cmidrule(lr){11-14}\cmidrule(lr){15-15}
        \multirow{4}{*}{\textbf{\thead{Give Me\\Some\\Credit}}} & 0.0 &                 0.98 &       15.86 &      15.86 &      2.41 &           1.00 &        4.12 &       4.12 &      2.09 &        1.00 &        5.87 &       5.87 &      2.19 &15.42\\
            & 0.1 &                 0.99 &       16.26 &      16.27 &      2.42 &           1.00 &        3.52 &       3.53 &      2.77 &        1.00 &        5.89 &       5.89 &      2.20 &11.20\\
            & 0.3 &                 0.99 &       12.59 &      12.64 &      2.48 &           1.00 &        2.92 &       2.93 &      2.86 &        0.96 &        4.03 &       4.05 &      2.24 &9.71\\
            & 0.5 &                 0.99 &       10.45 &      10.57 &      2.54 &           1.00 &        2.60 &       2.62 &      2.94 &        0.95 &        3.77 &       3.81 &      2.29 &8.52\\
        \cmidrule(lr){1-2}\cmidrule(lr){3-6}\cmidrule(lr){7-10}\cmidrule(lr){11-14}\cmidrule(lr){15-15}
        \multirow{4}{*}{\textbf{\thead{UCI\\Adult}}} & 0.0 &                 1.00 &        2.70 &       2.70 &      2.48 &           0.89 &        5.35 &       5.46 &      2.46 &        1.00 &        2.57 &       2.65 &      2.47 &3.98\\
            & 0.1 &                 1.00 &        2.64 &       2.94 &      2.49 &           0.89 &        5.81 &       6.34 &      2.47 &        1.00 &        2.46 &       2.82 &      2.47 &4.76\\
            & 0.3 &                 1.00 &        2.63 &       3.01 &      2.49 &           0.89 &        5.79 &       6.47 &      2.46 &        1.00 &        2.44 &       2.88 &      2.47 &4.72\\
            & 0.5 &                 1.00 &        2.62 &       3.16 &      2.48 &           0.89 &        5.70 &       6.73 &      2.45 &        1.00 &        2.43 &       3.02 &      2.47 &4.77\\
            \cmidrule(lr){1-2}\cmidrule(lr){3-6}\cmidrule(lr){7-10}\cmidrule(lr){11-14}\cmidrule(lr){15-15}
                   \multicolumn{2}{r}{\textbf{Max Error \%}}&                  1.06 &        15.42 &        15.42 &      5.20 &           1.64 &        11.20 &        11.20 &      2.35 &        0.72 &        9.41 &        9.41 &      6.88 \\
            \bottomrule
            \end{tabular}
    \end{adjustbox}
    \caption{User-interference analysis results on all datasets and base models. Metrics are computed on scaled data and averaged over $10$ trials. The maximum standard error bounds for each metric by task (row) and across tasks (column) are included. ``Prox. Diver.'' is shorthand for proximal diversity.}
   \label{tab:all-noise-appendix}
\end{table*}
\subsection{The Effects of Clustering on \Step}
\label{apdx:clustering}

\begin{table*}[t]
    \centering
    \begin{adjustbox}{max width=\textwidth}
        \begin{tabular}{clrrrrrrrrrrrrr}
        \toprule
                    \multicolumn{1}{c}{}&  & \multicolumn{4}{c}{\textbf{Logistic Regression}} & \multicolumn{4}{c}{\textbf{Random Forest}} & \multicolumn{4}{c}{\textbf{DNN}} \\
                    \cmidrule[0.75pt](lr){3-6}\cmidrule[0.75pt](lr){7-10}\cmidrule[0.75pt](lr){11-14}
                    \textbf{Dataset} & \textbf{$k$} &              \rot{Success} & \rot{Avg Success} & \rot{$\ell_2$ Dist.} & \rot{Diversity} &        \rot{Success} & \rot{Avg Success} & \rot{$\ell_2$ Dist.} & \rot{Diversity} &     \rot{Success} & \rot{Avg Success} & \rot{$\ell_2$ Dist.} & \rot{Diversity}& \rot{\textbf{Max Error \%}}\\
        \cmidrule[0.75pt](lr){1-2}\cmidrule[0.75pt](lr){3-6}\cmidrule[0.75pt](lr){7-10}\cmidrule[0.75pt](lr){11-14}\cmidrule[0.75pt](lr){15-15}
        \multirow{4}{*}{\textbf{\thead{Credit\\Card\\Default}}} & 1 &                 0.90 &        0.90 &        5.03 &      0.00 &           1.00 &        1.00 &        2.99 &      0.00 &        1.00 &        1.00 &        4.65 &      0.00 & 1.27\\
                & 2 &                 0.92 &        0.87 &        6.17 &      2.01 &           1.00 &        1.00 &        3.10 &      0.77 &        1.00 &        1.00 &        4.95 &      1.10 & 1.46 \\
                & 3 &                 1.00 &        0.91 &        7.06 &      2.58 &           1.00 &        0.84 &        3.20 &      0.95 &        1.00 &        1.00 &        5.04 &      1.29 &8.51 \\
                & 4 &                 1.00 &        0.93 &        6.96 &      4.48 &           1.00 &        0.86 &        3.30 &      1.78 &        1.00 &        1.00 &        5.06 &      2.38 &6.54 \\
                & 5 &                 1.00 &        0.95 &        7.06 &      4.96 &           1.00 &        0.88 &        3.65 &      2.52 &        1.00 &        1.00 &        5.19 &      2.83 &11.13 \\
                & 6 &                 1.00 &        0.95 &        7.22 &      5.50 &           1.00 &        0.90 &        3.44 &      2.40 &        1.00 &        1.00 &        5.29 &      3.37 & 8.56\\
            \cmidrule(lr){1-2}\cmidrule(lr){3-6}\cmidrule(lr){7-10}\cmidrule(lr){11-14}\cmidrule(lr){15-15}
        \multirow{4}{*}{\textbf{\thead{Give Me\\Some\\Credit}}} & 1 &                 0.58 &        0.58 &        2.37 &      0.00 &           1.00 &        1.00 &        2.83 &      0.00 &        1.00 &        1.00 &        4.11 &      0.00 & 8.67\\
                & 2 &                 0.90 &        0.70 &        8.78 &      6.58 &           1.00 &        0.73 &        4.12 &     10.28 &        1.00 &        1.00 &        4.17 &      0.27 & 31.71\\
                & 3 &                 0.98 &        0.70 &       15.32 &     10.73 &           1.00 &        0.74 &        4.03 &      9.91 &        1.00 &        0.99 &        5.87 &      2.15 &15.94\\
                & 4 &                 1.00 &        0.75 &       15.27 &     18.64 &           1.00 &        0.79 &        6.56 &     16.35 &        1.00 &        0.98 &        6.26 &      4.42 & 10.51\\
                & 5 &                 1.00 &        0.73 &       15.60 &     19.85 &           1.00 &        0.81 &        7.21 &     17.57 &        1.00 &        0.98 &        6.11 &      4.66 & 11.00\\
                & 6 &                 1.00 &        0.73 &       14.73 &     19.11 &           1.00 &        0.82 &        7.58 &     18.57 &        1.00 &        0.98 &        6.23 &      5.01 & 11.24\\
        \cmidrule(lr){1-2}\cmidrule(lr){3-6}\cmidrule(lr){7-10}\cmidrule(lr){11-14}\cmidrule(lr){15-15}
        \multirow{4}{*}{\textbf{\thead{UCI\\Adult}}} & 1 &                 0.24 &        0.24 &        1.89 &      0.00 &           0.23 &        0.23 &        2.09 &      0.00 &        0.35 &        0.35 &        1.86 &      0.00 &3.21\\
                & 2 &                 1.00 &        0.62 &        1.94 &      0.86 &           0.89 &        0.56 &        4.53 &      1.98 &        1.00 &        0.67 &        2.06 &      0.85 & 4.50\\
                & 3 &                 1.00 &        0.54 &        2.39 &      1.37 &           0.89 &        0.47 &        4.93 &      2.09 &        1.00 &        0.56 &        2.40 &      1.38 &4.01\\
                & 4 &                 1.00 &        0.48 &        2.35 &      2.12 &           0.89 &        0.39 &        4.91 &      3.15 &        1.00 &        0.63 &        2.37 &      2.11 & 4.03\\
                & 5 &                 1.00 &        0.58 &        2.28 &      2.09 &           0.90 &        0.44 &        4.20 &      3.05 &        1.00 &        0.63 &        2.33 &      2.07 & 5.34\\
                & 6 &                 1.00 &        0.56 &        2.33 &      2.16 &           0.91 &        0.44 &        3.75 &      2.99 &        1.00 &        0.60 &        2.37 &      2.15 & 3.58\\
            \cmidrule(lr){1-2}\cmidrule(lr){3-6}\cmidrule(lr){7-10}\cmidrule(lr){11-14}\cmidrule(lr){15-15}\multicolumn{2}{r}{\textbf{Max Error \%}}&                 6.23 &        8.28 &        31.71 &      31.71 &           3.21 &        8.45 &        12.86 &      21.53 &        2.89 &        2.89 &        7.72 &      14.83 \\
            \bottomrule
            \end{tabular}
    \end{adjustbox}
    \caption{Number of clusters for k-means clustering results. Metrics are computed on scaled data and averaged over $10$ trials. Maximum standard error bounds for each metric by task and across tasks are included.}
    \label{tab:kmeanclust}
\end{table*}

\begin{table*}[t]
    \centering
    \begin{adjustbox}{max width=\textwidth}
        \begin{tabular}{clrrrrrrrrrrrrr}
        \toprule
                    \multicolumn{1}{c}{}&  & \multicolumn{4}{c}{\textbf{Logistic Regression}} & \multicolumn{4}{c}{\textbf{Random Forest}} & \multicolumn{4}{c}{\textbf{DNN}} \\
                    \cmidrule[0.75pt](lr){3-6}\cmidrule[0.75pt](lr){7-10}\cmidrule[0.75pt](lr){11-14}
                    \textbf{Dataset} & \textbf{$k$} &              \rot{Success} & \rot{Avg Success} & \rot{$\ell_2$ Dist.} & \rot{Diversity} &        \rot{Success} & \rot{Avg Success} & \rot{$\ell_2$ Dist.} & \rot{Diversity} &     \rot{Success} & \rot{Avg Success} & \rot{$\ell_2$ Dist.} & \rot{Diversity}& \rot{\textbf{Max Error \%}}\\
        \cmidrule[0.75pt](lr){1-2}\cmidrule[0.75pt](lr){3-6}\cmidrule[0.75pt](lr){7-10}\cmidrule[0.75pt](lr){11-14}\cmidrule[0.75pt](lr){15-15}
        \multirow{4}{*}{\textbf{\thead{Credit\\Card\\Default}}} & 1 &                 0.90 &        0.90 &        5.03 &      0.00 &           1.00 &        1.00 &        2.99 &      0.00 &        1.00 &        1.00 &        4.65 &      0.00 & 1.27\\
            & 2 &                 0.90 &        0.89 &        5.04 &      0.05 &           1.00 &        1.00 &        2.99 &      0.03 &        1.00 &        1.00 &        4.65 &      0.01 & 10.48\\
            & 3 &                 0.91 &        0.89 &        5.04 &      0.09 &           1.00 &        1.00 &        2.99 &      0.05 &        1.00 &        1.00 &        4.65 &      0.02 & 8.33\\
            & 4 &                 0.92 &        0.89 &        5.04 &      0.13 &           1.00 &        1.00 &        2.99 &      0.06 &        1.00 &        1.00 &        4.65 &      0.03 & 6.03\\
            & 5 &                 0.93 &        0.90 &        5.05 &      0.15 &           1.00 &        1.00 &        2.99 &      0.07 &        1.00 &        1.00 &        4.65 &      0.04 & 6.10 \\
            & 6 &                 0.94 &        0.90 &        5.06 &      0.18 &           1.00 &        1.00 &        2.99 &      0.09 &        1.00 &        1.00 &        4.65 &      0.04 & 6.53\\
            \cmidrule(lr){1-2}\cmidrule(lr){3-6}\cmidrule(lr){7-10}\cmidrule(lr){11-14}\cmidrule(lr){15-15}
        \multirow{4}{*}{\textbf{\thead{Give Me\\Some\\Credit}}} & 1 &                 0.58 &        0.58 &        2.37 &      0.00 &           1.00 &        1.00 &        2.83 &      0.00 &        1.00 &        1.00 &        4.11 &      0.00 & 8.67\\
            & 2 &                 0.59 &        0.58 &        2.37 &      0.03 &           1.00 &        1.00 &        2.82 &      0.04 &        1.00 &        1.00 &        4.11 &      0.00 & 32.44\\
            & 3 &                 0.60 &        0.58 &        2.39 &      0.05 &           1.00 &        1.00 &        2.81 &      0.06 &        1.00 &        1.00 &        4.11 &      0.01 & 31.38\\
            & 4 &                 0.62 &        0.59 &        2.39 &      0.07 &           1.00 &        1.00 &        2.82 &      0.07 &        1.00 &        1.00 &        4.11 &      0.01 & 30.58\\
            & 5 &                 0.63 &        0.59 &        2.40 &      0.08 &           1.00 &        1.00 &        2.83 &      0.09 &        1.00 &        1.00 &        4.11 &      0.01 & 24.86\\
            & 6 &                 0.66 &        0.59 &        2.40 &      0.09 &           1.00 &        1.00 &        2.80 &      0.10 &        1.00 &        1.00 &        4.11 &      0.01 & 25.59\\
        \cmidrule(lr){1-2}\cmidrule(lr){3-6}\cmidrule(lr){7-10}\cmidrule(lr){11-14}\cmidrule(lr){15-15}
        \multirow{4}{*}{\textbf{\thead{UCI\\Adult}}} & 1 &                 0.24 &        0.24 &        1.89 &      0.00 &           0.23 &        0.23 &        2.09 &      0.00 &        0.35 &        0.35 &        1.86 &      0.00 & 3.21\\
            & 2 &                 0.25 &        0.25 &        1.90 &      0.08 &           0.23 &        0.22 &        2.10 &      0.06 &        0.36 &        0.35 &        1.87 &      0.08 & 22.89\\
            & 3 &                 0.26 &        0.25 &        1.92 &      0.17 &           0.25 &        0.22 &        2.12 &      0.11 &        0.36 &        0.35 &        1.87 &      0.16 &9.92\\
            & 4 &                 0.27 &        0.25 &        1.93 &      0.20 &           0.25 &        0.22 &        2.13 &      0.16 &        0.37 &        0.35 &        1.89 &      0.20 & 11.87\\
            & 5 &                 0.28 &        0.25 &        1.94 &      0.28 &           0.26 &        0.22 &        2.13 &      0.19 &        0.37 &        0.35 &        1.90 &      0.27 & 9.85\\
            & 6 &                 0.28 &        0.25 &        1.95 &      0.31 &           0.26 &        0.22 &        2.15 &      0.23 &        0.37 &        0.35 &        1.90 &      0.26 & 7.62\\
            \cmidrule(lr){1-2}\cmidrule(lr){3-6}\cmidrule(lr){7-10}\cmidrule(lr){11-14}\cmidrule(lr){15-15}
                   \multicolumn{2}{r}{\textbf{Max Error \%}}&                 2,91 &        2.52 &        1.19 &      15.89 &           3.21 &        3.21 &        8.99 &      32.44 &        3.30 &        2.95 &        4.76 &      22.89 \\
            \bottomrule
            \end{tabular}
    \end{adjustbox}
    \caption{Number of clusters for random cluster assignment results. Metrics are computed on scaled data and averaged over $10$ trials. Maximum standard error bounds for each metric by task and across tasks are included.}
    \label{tab:randclust}
\end{table*}

\Step supports base clustering methods which allow the user to specify the number of clusters (e.g. $k$-means) and those which estimate a number of clusters (e.g. affinity propagation). 
With the latter, the clustering method determines $k$, the number of potential recourse paths \Step can produce. 

We suggest using clustering approaches where the user can specify the number of clusters (number of recourse paths), aligning with more traditional recourse methods. Methods that empirically determine a number of clusters to satisfy some objective function can result in a large $k$--while having multiple recourse paths is desirable, past a certain threshold loses its marginal utility, can overload users, and become challenging to interpret \emph{if one is considering all of those paths without additional post-processing}. 

The consideration around number of clusters has more to do with how the clusters are distributed in high-dimensional space relative to the learned decision boundary of each classifier, rather than as a function of “cluster goodness” or quality. From these results, we show that StEP is robust to the number of clusters for k-means, and therefore also robust to the relative size of each cluster.

We expect that clustering can negatively impact recourse when the clustering method and base model leverages immutable features. Intuitively, as an example within the Give Me Some Credit, one of the k clusters reflects an older/retired population unlikely to experience financial distress. Additionally, as observed in the
\CCHVAE analysis in \Cref{subsec:holistic}, only $8\%$ of positively classified training examples had a feature value {\em age} $\leq 59$ under logistic regression. Therefore,  \Step consistently fails to find a path to this cluster for PoIs younger than $59$ years old. This is reflected in \Step’s Avg Success under logistic regression and Give Me Some Credit in \Cref{tab:all-holistic}. However, the other clusters were reachable for most PoIs, meaning \Step almost always had a successful recourse path for every PoI.

From our empirical results, the consideration around number of clusters has more to do with how the clusters are distributed in high-dimensional space relative to the learned decision boundary of each classifier, rather than as a function of ``cluster goodness'' or quality. From these results, we show that \Step is robust to the number of clusters for k-means, and therefore also robust to the relative size of each cluster (\Cref{tab:kmeanclust}).

We also generated results with uniformly random cluster assignments, i.e., each point is randomly assigned a cluster value from $\{1, ..., k\}$ (\Cref{tab:randclust}). The $k=1$ case for both k-means and random clustering produce the same results (as expected). For random clustering, as $k$ increases, we observe small improvements across our metrics of interest and that \Step demonstrates reasonable performance on most tasks. We note that, given the strict categorical constraints in the UCI Adult task, some clusters are inherently less reachable for many PoIs.  

\subsection{Additional \Step User-Interference Results} \label{apdx:step-user-interference}

In Table \ref{tab:all-noise-appendix} present results on the path length, path steps, and proximal diversity metrics (defined in Section \ref{apdx:additional-comparative-analysis}). The noise $\beta$ we introduce offers insights into \Step's behavior. Interestingly, introducing noise produces improvements in performance on the Give Me Some Credit Dataset (\Cref{tab:all-noise}) by decreasin path length and path steps while providing a small boost in proximal diversity.
Recall that \Step clusters the {\em positively} labeled evaluation data which introduces additional hyperparameters: the number of clusters, $k$, and the clustering method. 
We cluster the evaluation data in full \emph{without removing any outliers} via sci-kit learn's off-the-shelf k-means algorithm. In particular, one cluster in the Give Me Some Credit dataset contained several outliers. 

When generating recourse for a PoI $\poi$, \Step produces a direction towards the centers of each of the $k$ clusters. 
Any outliers in the evaluation data can cause dramatic shifts in the directions, due to the sparse topology of the outlier clusters. 

The Give Me Some Credit dataset exhibits significant variance between the mean path length across the clusters. 
Beyond removing outliers from the dataset, we suggest the following to correct for the variance influenced by the presence of outliers: 
\begin{inparaenum}[(a)]
\item increase the number of clusters used and ignore the paths computed for a cluster which has an average path length of $\geq 2$ standard deviations higher than the mean path length between the other clusters; 
\item increasing the number of clusters may help disperse the cluster assignments of outliers, and the more clusters \Step uses results in more paths produced, which can help support generating a greater number of viable suggested recourse options for a given stakeholder.     
\end{inparaenum}

\section{Experimental Details for Reproducibility} \label{apdx:experimental-details}

\subsection{Datasets Details}\label{apdx:datasets}
\noindent {\bf Credit Card Default:} we produce random train, validation, and test sets from the $30,000$ instances using a 70/15/15 split, resulting in sets with approximately 21k/4.5k/4.5k datapoints respectively. 
This dataset contains $24$ features, $3$ of which are categorical features. We also convert the columns denoted by ``$\text{PAY}_*$'' to continuous values by replacing instances of $-1, -2$ with $0$. We set ``MARRIAGE'', ``AGE'', and ``SEX'' as immutable features. For \Step we set ``EDUCATION'' as an ordinal feature that can only increase.

\noindent {\bf Give Me Some Credit:} we produce randomly assigned train, validation, and test splits from the instances in the same manner as with Credit Card Default. Give Me Some Credit consists of $10$ features. We remove points in this dataset with missing values prior to splitting into train, validation and test sets. We set ``age'' as an immutable feature.

\noindent {\bf UCI Adult/Census Income:} we produce randomly assigned train, validation, and test splits from the instances in the same manner as with Credit Card Default. UCI Adult/Census Income consists of $15$ features, $6$ of which are categorical. We drop the ``education'' feature as it is equivalent to the ``education-num'' feature. For \Step we set ``education-num'' as a feature that can only increase when giving recourse. We set the following features as immutable features: ``age'', ``marital-status'', ``relationship'', ``race'', ``sex'', ``native-country''.

\subsection{Base Model Details}\label{apdx:models}
Each method relies on a base machine learning model; in our experiments, we use sci-kit learn's implementation of logistic regression and random forest replicating practitioners' preferences for simpler and more explainable models. To evaluate more complex models, we use the PyTorch library implement a non-linear neural network model with two hidden layers of 16 and 32 neurons. We use simple holdout set validation to determine the hyperparameters used, outlined in \ref{apdx:hyperparams}.

\subsection{\Step Parameters Discussion}

\paragraph{Choosing a Distance Metric:}
We use the simple rotation invariant $\ell_2$-norm as our distance metric. 
To ensure the distance function is not biased towards any feature, we normalize continuous feature values with respect to their mean and standard deviation (\citet{mothilal2020explaining} follow a similar methodology). More formally, for a continuous feature $i$, we transform its value as follows:
\begin{align*}
    x^j_i := \frac1{\sigma_i}(x^j_i - \mu_i)
\end{align*}
where $\mu_i$ is the mean value of the feature and $\sigma_i$ is the standard deviation.

\paragraph{Choosing the $\alpha$ function}
We propose two different $\alpha$  functions --- the {\em volcano} function\footnote{so named for its shape when drawn on a whiteboard.} and the {\em sloped} function.
The volcano function weighs nearby points higher than faraway points, but all points closer than a specific threshold are weighed equally (a similar function is used by \citet{patel2020explanation}). We denote this function by $\alpha_v$ and define it as follows:
\begin{align}
         \alpha_{v}(z ; d, \gamma) = 
               \begin{cases} 
                  \frac{1}{z^d} & z > \gamma \\
                  \frac{1}{\gamma^d} & z \leq \gamma
               \end{cases}\label{eq:alpha-volcano}
    \end{align}
The sloped function is shaped like a normal distribution curve with a $0$ mean. We denote this function by $\alpha_s$ and define it as follows
\begin{align*}
    \alpha_s(z ; w) = \exp\left( -\frac{1}{2} \left(\frac{z}{w}\right)^2 \right)
\end{align*}

Both of these functions are continuous and upper bounded. If we divide the output of these functions by $z$ (the input to the functions), the output will satisfy the conditions of Theorem \ref{thm:step-privacy}, i.e. the \Step algorithm can be made differentially private with a slightly modified version of these $\alpha$ functions.

In our experiments, we use the volcano function $\alpha_v$ with $d = 2$ and $\gamma = 0.5$.

\subsection{Other Hyperparameter Selection} \label{apdx:hyperparams}

We perform a basic grid search across base models, recourse methods, and datasets using sci-kit learn's GridSearchCV function. To determine appropriate hyperparameters to use across the base ML models and recourse methods, we roughly optimize for a reasonably high success rate and to minimize distance. 

For each base ML model, we sweep over a confidence cutoff between $\{0.50,0.55,0.60,0.65, 0.7\}$ and evaluate the performance of each recourse method. We find that $0.7$ provides reasonable results across datasets, base models, and recourse methods while remaining a reasonable value a practitioner may use. We vary $k \in \{1, 2, 3, 4, 5\}$, the number of paths to produce for each PoI (and for \Step, the number of clusters to produce), and fix $k=3$ for our comparative analysis and user-interference experiments. 

For \Step, we consider step sizes in $\{0.10,0.25,0.50,0.75, 1.00\}$ and fix a value of $1$ across all experiments. \FACE has two main hyperparameters: graph type in $\{k\text{-NN}, \epsilon, \text{KDE} \}$ and distance threshold (a float). In \ref{apdx:face-discuss}, we provide substantial discussion on the distance threshold parameter. For all our experiments using \FACE, we used the $\epsilon$ graph following a suggestion from the authors of the method. For \CCHVAE we set the step distance hyperparameter to $1$. We detail this hyperparameter in \ref{apdx:baselines}. For all applicable recourse methods, we allow a maximum of $50$ iterations to produce $k$ counterfactual(s).

\subsection{Computational Resources Used}
We used two machines for our empirical evaluation, including for base model training (i.e.\ logistic regression, random forest, neural network), for all recourse experiments, and post-processing of results to produce metrics. These machines contain:
\begin{itemize}
\item
$8$ CPU cores, $64$GB RAM, NVIDIA 4070 GPU, and $2$TB local SSD disk memory.
\item $16$ CPU cores $64$GB RAM, NVIDIA 4080S GPU, and $4$TB local SSD disk memory.
\end{itemize}
\end{document}